\documentclass{article} 
\usepackage{iclr2027_conference,times}

\usepackage{amsmath,amsfonts,bm}

\def\eqref#1{equation~\ref{#1}}

\def\1{\bm{1}}

\DeclareMathAlphabet{\mathsfit}{\encodingdefault}{\sfdefault}{m}{sl}
\SetMathAlphabet{\mathsfit}{bold}{\encodingdefault}{\sfdefault}{bx}{n}

\usepackage{adjustbox}
\usepackage{algorithm2e}
\usepackage{amsfonts}
\usepackage{amsmath}
\usepackage{amssymb}
\usepackage{array}
\usepackage{colortbl}
\usepackage{arydshln}
\usepackage{bbm}
\usepackage{bm}
\usepackage{booktabs}
\usepackage{caption}
\usepackage{CJKutf8}
\usepackage{diagbox}
\usepackage{enumitem}
\usepackage{float}
\usepackage{fontawesome5}
\usepackage{graphicx}
\usepackage{hyperref}
\usepackage{lineno}
\usepackage{listings}
\usepackage{lipsum}
\usepackage{makecell}
\usepackage{marvosym}
\usepackage{microtype}
\usepackage{multicol}
\usepackage{multirow}
\usepackage{pgfplots}
\usepackage{pifont}
\usepackage{subcaption}
\usepackage{tabularx}
\usepackage{tcolorbox}
\usepackage{tikz}
\usepackage[normalem]{ulem}
\usepackage{url}
\usepackage{wrapfig}
\usepackage[table]{xcolor}
\usetikzlibrary{positioning, arrows.meta}
\usetikzlibrary{calc}

\usepackage{xcolor}
\definecolor{myblue}{HTML}{3498DB}
\definecolor{myorange}{HTML}{F39C12}
\definecolor{mygreen}{HTML}{4CBB17}

\usetikzlibrary{3d, arrows.meta, calc, fit, positioning, shapes}
\tcbuselibrary{skins, breakable, theorems}
\pgfplotsset{compat=1.18}

\definecolor{deepred}{rgb}{0.6, 0, 0}
\newcommand{\colorref}[1]{\textcolor{deepred}{\ref{#1}}}
\newcommand{\colorcitep}[1]{\textcolor{deepred}{\citep{#1}}}

\usepackage{amsmath}
\newtheorem{theorem}{Theorem}

\newenvironment{proof}{{\noindent\it Proof.}\quad}{\hfill $\square$\par}

\title{Group-Marginalized Self-Rewarding RL \\Drives Zero-Label Self-Evolving}

\author{\vspace{0.015in}Yiming Wang$^1$, ~Yikang Liu$^1$, ~Qingyuan Tian$^1$, ~Xingyu Chen$^1$,\\
\vspace{0.015in}{\bf Zhuosheng Zhang$^1$, ~Zhaopeng Tu$^2$, ~Rui Wang$^1$}\\
$^1$School of Computer Science, Shanghai Jiao Tong University
~~~~$^2$Tencent\\
\texttt{yiming.wang@sjtu.edu.cn}
}

\iclrfinalcopy 
\begin{document}

\maketitle

\vspace{-0.3in}
\begin{abstract}
Self-rewarding reinforcement learning (RL) enables large language models (LLMs) to self-evolve without human labels.
Existing ensemble-based methods construct reward references from rollout groups and assign rewards accordingly.
However, a response's reward representation also depends on its randomly sampled group context, \emph{i.e.}, the other responses in its group.
Using only one group-context realization may miss desired reward signals and provide unreliable guidance for policy optimization.
To address this issue, we propose \textbf{Group-Marginalized Advantage Estimation (GMAE)}, which aggregates reward realizations across possible contexts into a response-level distribution and estimates expected advantages.
Experiments across eight benchmarks and four base models demonstrate strong performance and cross-domain generalization.
GMAE also exhibits stable learning, low extra cost, and good applicability across training datasets and RL backbones.
\end{abstract}

\begin{figure}[H]
    \vspace{-0.2in}
    \centering
    \includegraphics[width=\linewidth]{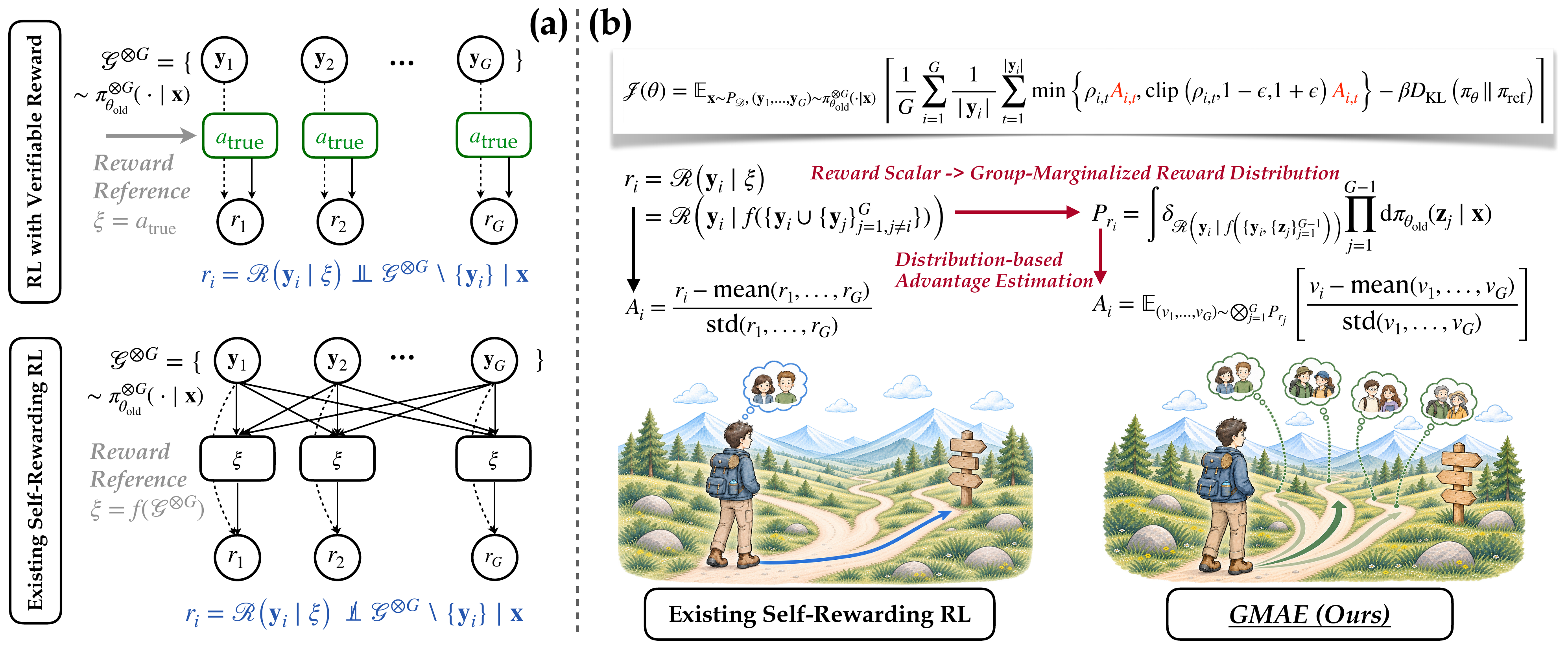}
    \vspace{-0.2in}
    \caption{
    \textbf{(a) GMAE Motivation:}
    RLVR evaluates each response against external ground truth, whereas in self-rewarding RL its reward representation depends on the remaining responses in its rollout group, termed its \textit{group context}.
    \textbf{(b) GMAE Overview:}
    Conventional methods use a single group-context realization to obtain each response's reward representation, while GMAE aggregates rewards across possible contexts into a reward distribution and estimates the expected advantage.
    }
    \label{fig:overview}
\vspace{-0.2in}
\end{figure}

\section{Introduction}
\label{sec:intro}
\vspace{-0.05in}

Self-evolution enables large language models (LLMs) to learn autonomously from their own experience~\colorcitep{tao2024selfevolution}, yet becomes harder as models increasingly decide what, when, and how to evolve~\colorcitep{gao2025selfevolving}.
A fundamental setting is \textit{zero-label} learning, where models derive signals from environments and data without human labels.
In RL post-training, this motivates \textit{self-rewarding reinforcement learning}~\colorcitep{yuan2024selfrewarding,zhao2025learning}, which uses self-generated rewards for policy optimization, unlike \textit{RL with verifiable rewards (RLVR)} relying on ground-truth labels.

Existing self-rewarding RL methods mainly derive supervision from two sources of self-generated information: probability-based and ensemble-based signals~\colorcitep{he2026how}.
Probability-based methods estimate response quality from token probabilities or entropy~\colorcitep{zhao2025learning,li2025confidence}, but often provide weak supervision and may lead to training instability or collapse~\colorcitep{zhang2025no,roy2025you}.
Ensemble-based methods instead aggregate the rollout response group into a reward reference, typically a pseudo-label obtained through majority voting~\colorcitep{wang2023selfconsistency,zuo2026ttrl}, and reward each response against this reference.
Owing to its simplicity and effectiveness, this paradigm is widely adopted, with subsequent methods refining reward-reference construction and reward function~\colorcitep{zhang2026consistent,wang2026selfharmony,wu2026srttrl}.

Despite their promising performance, these ensemble-based methods share a hidden rewarding limitation, as illustrated in \textcolor{deepred}{Figure~\ref{fig:overview}(a)}.
For a response used for policy optimization, its reward representation depends not only on the response itself but also on the other responses in its rollout group.
These responses constitute its randomly sampled \textbf{group context}.
This dependence on a random factor introduces uncertainty into the reward representation: a fixed response may receive the desired reward signal only under certain group contexts, as shown in \textcolor{deepred}{Figure~\ref{fig:reward-stochasticity-violin}}.
However, existing methods represent this context-dependent reward using only a single group-context realization.
If the sampled realization misses the desired signal, that signal cannot guide subsequent policy optimization.
\textit{Thus, relying on a single group-context realization for reward representation makes policy updates sensitive to random rollout sampling, reducing policy optimization reliability.}

Following the above analysis, a reward representation restricted to a single group-context realization captures the desired signal only when that realization happens to be suitable for the response.
Since this depends on both the self-rewarding rule and the response characteristics, it cannot be guaranteed in practice.
Therefore, we consider preserving more reward realizations across different group contexts to reduce the risk of missing desired signals.
This motivates our \textbf{Group-Marginalized Advantage Estimation (GMAE)} strategy, illustrated in \textcolor{deepred}{Figure~\ref{fig:overview}(b)}.
GMAE aggregates these reward realizations into a distribution that serves as the response-level reward representation, then uses it to estimate the response's expected advantage.
Notably, GMAE is {\it a general strategy} agnostic to reward-reference construction and reward function: these components define the self-rewarding rules and serve as the ``fuel'', while GMAE acts as a general ``vehicle'' for leveraging them.
To apply GMAE in practice, we instantiate it with three representative self-rewarding rules.

We conduct extensive experiments with four base models across eight benchmarks covering mathematics, science, knowledge, and coding.
GMAE achieves strong performance and generalizes from mathematical training data to unseen domains, demonstrating more reliable reward estimation.
It also exhibits stable learning dynamics and low variance across random seeds, demonstrating more stable optimization.
Further experiments also confirm its applicability across training datasets and RL backbones.
Overall, these results establish GMAE as a reliable general framework for zero-label self-evolution through self-rewarding RL.

\vspace{-0.05in}
\section{Preliminaries: RLVR and Self-Rewarding RL}
\label{sec:preliminary}

In modern LLM post-training, RLVR is a common RL paradigm that improves reasoning through verifiable rewards.
As a representative algorithm, GRPO~\colorcitep{shao2024deepseekmath} uses a PPO-style objective~\colorcitep{schulman2017proximal,ouyang2022training}, removing the value model and deriving advantages from group-relative rewards.
At each training step, given a prompt \(\mathbf{x}\) sampled from the dataset \(\mathcal{D}\), GRPO samples \(G\) responses
\(\mathcal{G}^{\otimes G}=\{\mathbf{y}_1,\dots,\mathbf{y}_G\}\)
from
\(\pi_{\theta_{\mathrm{old}}}(\cdot\mid\mathbf{x})\).
Let \(\mathcal A(\mathbf y_i)\) denote the answer corresponding to response \(\mathbf y_i\).
Each response receives a scalar reward \(r_i\), forming
\(\mathbf r=(r_1,\ldots,r_G)\), and its advantage is obtained by group-wise normalization:

\vspace{-0.15in}
\begin{equation}
A_i=\frac{r_i-\mu_{\mathbf r}}{\sigma_{\mathbf r}},
\qquad
\mu_{\mathbf r}=\frac{1}{G}\sum_{j=1}^{G}r_j,
\qquad
\sigma_{\mathbf r}=\sqrt{\frac{1}{G}\sum_{j=1}^{G}\left(r_j-\mu_{\mathbf r}\right)^2}.
\label{eq:grpo-advantage}
\end{equation}
\vspace{-0.15in}

Throughout, zero advantages are assigned whenever \(\sigma_{\mathbf r}=0\).
The response-level advantage is shared across all tokens, \emph{i.e.},
\(A_{i,t}=A_i\).
Defining
$
\rho_{i,t}
=
\pi_{\theta}(y_{i,t}\mid\mathbf{x},\mathbf{y}_{i,<t})
/
\pi_{\theta_{\mathrm{old}}}(y_{i,t}\mid\mathbf{x},\mathbf{y}_{i,<t}),
$
GRPO optimizes a clipped objective with KL regularization \(D_{\mathrm{KL}}\):

\vspace{-0.15in}
\begin{equation}
\resizebox{0.92\columnwidth}{!}{$
\displaystyle
\mathcal{J}(\theta)
=
\mathbb{E}_{\mathbf{x}\sim P_{\mathcal D},\,\mathcal{G}^{\otimes G}\sim\pi_{\theta_{\mathrm{old}}}^{\otimes G}(\cdot\mid\mathbf{x})}
\left[
\frac{1}{G}\sum_{i=1}^{G}\frac{1}{|\mathbf y_i|}\sum_{t=1}^{|\mathbf y_i|}
\min\!\left\{
\rho_{i,t}A_{i,t},
\operatorname{clip}\!\left(\rho_{i,t},1-\epsilon,1+\epsilon\right)A_{i,t}
\right\}
-\beta D_{\mathrm{KL}}\!\left(\pi_\theta\,\|\,\pi_{\mathrm{ref}}\right)
\right].
$}
\label{eq:grpo-objective}
\end{equation}
\vspace{-0.1in}

\vspace{-0.1in}
\paragraph{Standard RLVR.}
In \textcolor{deepred}{Eq.\ref{eq:grpo-advantage}}, the reward \(r_i\) of each response \(\mathbf y_i\) is computed by a reward function \(\mathcal R(\mathbf y_i\mid\xi)\), where \(\xi\) denotes the \textit{reward reference}.
In standard RLVR, this reference is given by the ground-truth answer \(a_{\mathrm{true}}\).
The reward is therefore determined solely by the fixed external reference:
\begin{equation}
    \xi
    \equiv
    a_{\mathrm{true}},
    \qquad
    r_i^{\mathrm{VR}}
    =
    \mathcal R\!\left(
        \mathbf y_i
        \mid
        \xi
    \right)
    =
    \mathbbm{1}\!\left[
        \mathcal A(\mathbf y_i)=a_{\mathrm{true}}
    \right].
    \label{eq:rlvr-reward}
\end{equation}

\paragraph{Mainstream Pipeline of Self-Rewarding RL.}
However, \(a_{\mathrm{true}}\) is unavailable in self-rewarding RL, so \(\xi\) must be inferred from the sampled rollout group
\(\mathcal G^{\otimes G}=\{\mathbf y_1,\ldots,\mathbf y_G\}\).
Specifically, a group-dependent function \(f\) constructs the reward reference
\(\xi=f(\mathcal G^{\otimes G})\), against which the reward for \(\mathbf y_i\) is:

\vspace{-0.3in}
\begin{equation}
    r_i^{\mathrm{SR}}
    =
    \mathcal R\!\left(
        \mathbf y_i
        \mid
        \xi
    \right)
    =
    \mathcal R\!\left(
        \mathbf y_i
        \mid
        f(\mathcal G^{\otimes G})
    \right).
    \label{eq:ttrl-reward}
\end{equation}
A common choice is majority voting~\colorcitep{wang2023selfconsistency,zuo2026ttrl}, where \(\xi\) is the most frequent answer among
\(\{\mathcal A(\mathbf y_1),\ldots,\mathcal A(\mathbf y_G)\}\), and \(\mathcal R\) is a binary indicator.
Other methods refine \(f\) or \(\mathcal R\) while following the same pipeline.
Their details are provided in
\textcolor{deepred}{Appendix~\ref{app:baseline-methods}}.

\section{On the Limitation of Self-Reward Uncertainty}
\label{sec:existing-limitations}

Existing self-rewarding methods differ in their choice of $(f,\mathcal R)$, but share a common property: the reward assigned to each response depends on the rollout group it was sampled from.
This dependence is implicit in \textcolor{deepred}{Eq.\ref{eq:ttrl-reward}}, where the pseudo-label
$f(\mathcal G^{\otimes G})$
is constructed from the entire rollout group.

To make this dependence explicit, we fix a response $\mathbf y_i$ and decompose its rollout group as
\begin{equation}
    \mathcal G^{\otimes G}
    =
    \{\mathbf y_i\}
    \cup
    \mathcal H_i,
    \qquad
    \mathcal H_i
    \triangleq
    \{\mathbf y_j\}_{j\neq i},
    \label{eq:group-context}
\end{equation}
where $\mathcal H_i$ denotes the \textbf{group context} realized alongside $\mathbf y_i$.
Since all responses are independently sampled from the same policy $\pi_{\theta_{\mathrm{old}}}(\cdot\mid\mathbf x)$, group contexts follow the same underlying distribution for all focal responses.
We therefore use $\mathcal H$ to denote a generic random group context.
For a fixed response $\mathbf y_i$, its reward under $\mathcal H$ is
\begin{equation}
\resizebox{0.85\columnwidth}{!}{$
    r_i(\mathcal H)
    =
    \mathcal R\!\left(
        \mathbf y_i
        \mid
        f\!\left(
            \{\mathbf y_i\}\cup\mathcal H
        \right)
    \right),
    \qquad
    \mathcal H=\{\mathbf z_1,\ldots,\mathbf z_{G-1}\}
    \sim
    Q(\cdot\mid\mathbf x)
    \equiv
    \pi_{\theta_{\mathrm{old}}}^{\otimes(G-1)}(\cdot\mid\mathbf x)
$}
    \label{eq:context-dependent-reward}
\end{equation}
Although $r_i(\mathcal H_i)$ is deterministic once the realized context
$\mathcal H_i$ is fixed, $\mathcal H$ itself is randomly sampled.
Thus, the reward intended to estimate the quality of $\mathbf y_i$ depends not only on the response itself but also on its group context, introducing uncertainty into the resulting learning signal.

\begin{figure}[t]
    \vspace{-0.2in}
    \centering
    \includegraphics[width=\textwidth]{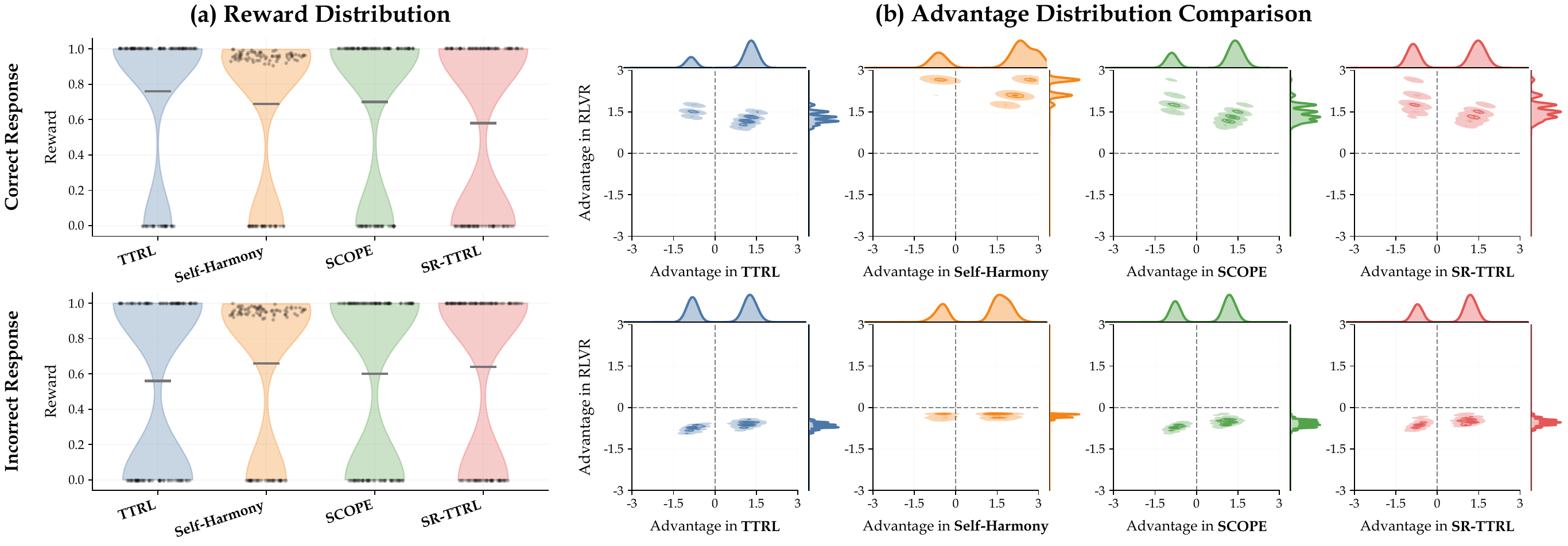}
    \vspace{-0.25in}
    \caption{
    \textbf{Self-Reward Uncertainty Demonstration.}
    \textbf{(a)} Reward distributions for a fixed correct response (top) and a fixed incorrect response (bottom) across group contexts under TTRL \colorcitep{zuo2026ttrl}, Self-Harmony \colorcitep{wang2026selfharmony}, SCOPE \colorcitep{wang2026beyond}, and SR-TTRL \colorcitep{wu2026srttrl}. Dots denote individual realizations and gray bars denote means.
    \textbf{(b)} Joint distributions of self-rewarding ($x$-axis) and RLVR ($y$-axis) advantages for the same responses.
    Results here are from \texttt{Qwen3-8B-Base} on a \texttt{DAPO-14K} prompt; more cases are provided in \textcolor{deepred}{Appendix~\ref{app:more-motivation}}.
    }
    \label{fig:reward-stochasticity-violin}
    \vspace{-0.2in}
\end{figure}

\vspace{-0.1in}
\paragraph{Empirical Validation.}
We examine how group-context variation affects the reward and advantage of the same response.
Given a model $\pi$, we fix a prompt $\mathbf x$ and sample one correct focal response $\mathbf y^{+}$ and one incorrect focal response $\mathbf y^{-}$.
For each response, we sample $M=100$ independent group contexts, each containing $G-1$ newly sampled responses with $G=16$.
These contexts produce $M$ self-reward realizations and group-normalized advantages, forming their empirical distributions.
For comparison, we compute the corresponding RLVR advantage under each context.

\textcolor{deepred}{Figure~\colorref{fig:reward-stochasticity-violin}} presents results for four representative self-rewarding methods.
As shown in \textcolor{deepred}{\textbf{(a)}}, the reward of the same response varies substantially across group contexts.
Although the desired signal is positive for a correct response and negative for an incorrect one, both may receive the opposite signal under some contexts.
\textit{Thus, restricting reward assignment to a single group-context realization may miss the desired signal, limiting the reliability of self-reward as an estimate of response quality.}
In \textcolor{deepred}{\textbf{(b)}}, RLVR advantages vary in magnitude but preserve the desired direction: correct responses remain non-negative, whereas incorrect ones remain non-positive.
By contrast, self-rewarding advantages often change sign, causing the same response to be reinforced under one context but suppressed under another.
\textit{Therefore, group-context uncertainty in reward assignment can propagate into advantage estimation, making the optimization direction dependent on the sampled context.}
\section{GMAE: Group-Marginalized Advantage Estimation}
\label{sec:gmae}


\textcolor{deepred}{Section~\ref{sec:existing-limitations}} shows that a correct response may miss a positive reward signal under a single group context. However, the suitable context is response-dependent and unknown in advance, making it difficult to identify in practice. Therefore, instead of relying on one group context, we aggregate reward realizations across contexts into a distribution-level response representation, improving the chance of retaining desired signals. This idea motivates our \textbf{Group-Marginalized Advantage Estimation (GMAE)}. In \textcolor{deepred}{Section~\ref{sec:gmae-main}}, we will introduce its core algorithm, and in \textcolor{deepred}{Section~\ref{sec:gmae-instantiations}}, we will instantiate it with different self-rewarding rules.

\vspace{-0.05in}
\subsection{Core Algorithm}
\label{sec:gmae-main}

\paragraph{\textcolor{deepred}{Main Step I:} Group-Marginalized Reward Distribution.}
For a response $\mathbf y_i$, instead of using the scalar reward induced by the single realized group context $\mathcal G^{\otimes G} / \{\mathbf y_i\}$, we consider its reward over the group-context distribution in \textcolor{deepred}{Eq.\ref{eq:context-dependent-reward}}.
Let $\delta_v$ denote the Dirac measure concentrated at $v$, we define the \textit{group-marginalized reward distribution} as

\vspace{-0.15in}
\begin{equation}
\resizebox{0.8\columnwidth}{!}{$
\displaystyle
\textcolor{myblue}{
P_{r_i} =
\int
\delta_{\mathcal R\left(\mathbf y_i\mid f\left(\{\mathbf y_i\}\cup\mathcal H\right)\right)}
\,\mathrm dQ(\mathcal H\mid\mathbf x)
}
=
\int
\delta_{\mathcal R\left(\mathbf y_i\mid f\left(\{\mathbf y_i,\mathbf z_1,\ldots,\mathbf z_{G-1}\}\right)\right)}
\prod_{j=1}^{G-1}\mathrm d\pi_{\theta_{\mathrm{old}}}(\mathbf z_j\mid\mathbf x),
$}
\label{eq:group-marginalized-reward}
\end{equation}
\vspace{-0.15in}

However, directly computing \textcolor{deepred}{Eq.\ref{eq:group-marginalized-reward}} is intractable because it requires marginalizing over the $(G-1)$-fold product distribution.
Naive Monte Carlo estimation can approximate this marginalization, but requires repeatedly sampling fresh group contexts for each response, incurring substantial rollout cost.
Therefore, we reuse sampled responses through a shared candidate pool~\colorcitep{wang2026breaking}.

Specifically, besides the $G$ main responses
$\mathcal G^{\otimes G}=\{\mathbf y_1,\ldots,\mathbf y_G\}$ used for policy optimization,
we sample $G'$ auxiliary responses to form
$\mathcal C=\{\mathbf y_1,\ldots,\mathbf y_{G+G'}\}$.
For each main response $\mathbf y_i$ ($1 \leq i \leq G$), choosing $G-1$ companions from
$\mathcal C\setminus\{\mathbf y_i\}$ yields
$\binom{G+G'-1}{G-1}$ possible group contexts.
We uniformly sample $K$ distinct contexts
$\{\mathcal S_i^k\}_{k=1}^{K}$ up to
$\tau$,
and construct the \textit{empirical distribution}\footnote{To avoid combinatorial explosion in enumeration, we set $\tau=10^4$ by increasing the number of evaluated contexts tenfold from $10$ and observing negligible changes in the empirical reward distribution beyond $10^4$.}

\vspace{-0.15in}
\begin{equation}
\resizebox{0.65\columnwidth}{!}{$
\displaystyle
\textcolor{mygreen}{
\widehat P_{r_i}
=
\frac{1}{K}\sum_{k=1}^{K}\delta_{\mathcal{R}\left( \mathbf y_i \mid f\left(\{\mathbf y_i\}\cup\mathcal S_i^k\right) \right)}
},
\qquad
K=\min\left\{\tau,\binom{G+G'-1}{G-1}\right\}.
$}
\label{eq:gmae-empirical-reward}
\end{equation}
\vspace{-0.1in}

Now we measure the error from replacing the ideal $P_{r_i}$ in \textcolor{deepred}{Eq.\ref{eq:group-marginalized-reward}} with the empirical $\widehat P_{r_i}$ in \textcolor{deepred}{Eq.\ref{eq:gmae-empirical-reward}} with the total variation distance $d_{\mathrm{TV}}(\cdot, \cdot)$.
For a finite reward-value space $\mathcal V_i=\operatorname{supp}(P_{r_i})$, with the confidence level and $G$ fixed, the error scales as

\vspace{-0.25in}
\begin{equation}
\resizebox{0.45\columnwidth}{!}{$
\displaystyle
d_{\mathrm{TV}}\!\left(\widehat P_{r_i},P_{r_i}\right)
=
\mathcal O\!\left(
\sqrt{\frac{|\mathcal V_i|}{K}}
+
\sqrt{\frac{|\mathcal V_i|}{G+G'}}
\right).
$}
\label{eq:bound}
\end{equation}
\vspace{-0.15in}

It decreases with $K$ and $G'$ but increases with $|\mathcal V_i|$.
The exact bound and proof are in
\textcolor{deepred}{Appendix~\ref{app:gmae-approximation-bound}}.

\vspace{-0.1in}
\paragraph{\textcolor{deepred}{Main Step II:} Distribution-based Advantage Estimation.}
After marginalizing over group contexts, each $\widehat P_{r_i}$ defines a response-level reward profile that is no longer tied to any particular group-context realization.
Accordingly, GMAE treats these distributions as response-level reward representations and independently draws reward realizations from them for advantage estimation.

Specifically, we construct the product distribution $\bigotimes_{j=1}^{G}\widehat P_{r_j}$ over joint reward realizations $\mathbf v=(v_1,\ldots,v_G)$,
where each $v_j\sim\widehat P_{r_j}$ and
$\mathbf v\in\widehat{\mathcal V}
=\prod_{j=1}^{G}\widehat{\mathcal V}_j$.
For each $\mathbf v$, we compute the group-normalized advantage and then take its expectation under the product distribution, yielding

\vspace{-0.15in}
\begin{equation}
\textcolor{mygreen}{
\resizebox{0.85\columnwidth}{!}{$
\displaystyle
A_i^{\mathrm{gmae}}
=
\mathbb E_{\mathbf v\sim\bigotimes_{j=1}^{G}\widehat P_{r_j}}
\left[
\frac{v_i-\mu_{\mathbf v}}{\sigma_{\mathbf v}}
\right]
=
\sum_{\mathbf v\in\widehat{\mathcal V}}
\left\{
\prod_{j=1}^{G}
\left[
\frac{1}{K}\sum_{k=1}^{K}
\mathbbm{1}
\left[
\mathcal R\left(
\mathbf y_j
\mid
f\left(\{\mathbf y_j\}\cup\mathcal S_j^k\right)
\right)
=
v_j
\right]
\right]
\right\}
\frac{v_i-\mu_{\mathbf v}}{\sigma_{\mathbf v}}.
$}
}
\label{eq:gmae-expected-advantage}
\end{equation}

\vspace{-0.1in}
\paragraph{\textcolor{deepred}{Attached Mechanism:} Advantage Scale Calibration.}
Although group-marginalized advantages preserve richer reward information, {\it marginalization can shrink their scale and weaken policy updates}.
Since $\sum_{i=1}^{G}A_i^{\mathrm{gmae}}=0$, their variance is
$\operatorname{Var}(\{A_i^{\mathrm{gmae}}\}_{i=1}^{G})
=\frac{1}{G}\sum_{i=1}^{G}(A_i^{\mathrm{gmae}})^2$,
which directly reflects the advantage scale and effective update strength.
By Jensen's inequality~\colorcitep{jensen1906},

\vspace{-0.1in}
\begin{equation}
\resizebox{0.7\columnwidth}{!}{$
\displaystyle
\operatorname{Var}\!\left(\{A_i^{\mathrm{gmae}}\}_{i=1}^{G}\right)
=
\frac{1}{G}\sum_{i=1}^{G}
\left(
\mathbb E_{\mathbf v}
\left[
\frac{v_i-\mu_{\mathbf v}}{\sigma_{\mathbf v}}
\right]
\right)^2
\le
\mathbb E_{\mathbf v}
\left[
\frac{1}{G}\sum_{i=1}^{G}
\left(
\frac{v_i-\mu_{\mathbf v}}{\sigma_{\mathbf v}}
\right)^2
\right]
\le 1.
$}
\label{eq:gmae-contraction}
\end{equation}
\vspace{-0.1in}

The right-hand side is the expected variance of conventional group-wise advantages without marginalization and is at most 1 under normalization.
Thus, marginalization often reduces the advantage scale, {\it motivating calibration to restore the non-marginalized update strength}.


Prompt-level calibration could counteract GMAE's reduced advantage scale by matching each prompt to its scale from a single group context. However, scale differences across prompts can reflect prompt-specific reward uncertainty.
A response favored in some contexts but disfavored in others has normalized advantages that partially cancel under marginalization. This reduces the prompt's scale and makes its updates more conservative. Calibrating each prompt separately would erase these uncertainty-dependent differences. We formalize this relation in \textcolor{deepred}{Appendix~\ref{app:advantage-uncertainty}}.

Therefore, we adopt batch-level calibration to restore the overall scale while preserving relative scale differences across prompts.
For a batch of $B$ prompts, let
$A_{b,i}^{\mathrm{base}}$ denote the non-marginalized advantage.
We denote the batch-level scales of
$A_{b,i}^{\mathrm{base}}$ and $A_{b,i}^{\mathrm{gmae}}$
by $\sigma_{\mathrm{base}}$ and $\sigma_{\mathrm{gmae}}$:

\vspace{-0.1in}
\begin{equation}
\resizebox{0.7\columnwidth}{!}{$
\displaystyle
\sigma_{\mathrm{base}}
=
\sqrt{
\frac{1}{BG}
\sum_{b=1}^{B}\sum_{i=1}^{G}
\left(A_{b,i}^{\mathrm{base}}\right)^2
},
\qquad
\sigma_{\mathrm{gmae}}
=
\sqrt{
\frac{1}{BG}
\sum_{b=1}^{B}\sum_{i=1}^{G}
\left(A_{b,i}^{\mathrm{gmae}}\right)^2
}.
$}
\label{eq:gmae-batch-scale}
\end{equation}
\vspace{-0.1in}

We use $\sigma_{\mathrm{base}}$ as the reference because it retains the conventional update scale induced by group-wise normalization.
We then calibrate all marginalized advantages using a shared factor $\lambda$, with a lower bound of $1$ to ensure that calibration only compensates for scale contraction:

\vspace{-0.1in}
\begin{equation}
\textcolor{mygreen}{
\widetilde A_{b,i}^{\mathrm{gmae}}
=
\lambda A_{b,i}^{\mathrm{gmae}}
},
\qquad
\lambda
=
\max\left(
1,\,
\frac{\sigma_{\mathrm{base}}}{\sigma_{\mathrm{gmae}}}
\right).
\label{eq:gmae-calibration}
\end{equation}
\vspace{-0.3in}

\subsection{Instantiating GMAE with Self-Rewarding Rules}
\label{sec:gmae-instantiations}

As shown in \textcolor{deepred}{Section~\ref{sec:gmae-main}}, GMAE is a general strategy agnostic to the reward-reference construction $f$ and reward function $\mathcal R$.
Different $(f,\mathcal R)$ configurations define different self-rewarding rules that provide the ``fuel'', while GMAE serves as the ``vehicle'' that better leverages these signals for policy optimization.
Naturally, concrete choices of $f$ and $\mathcal R$ are required to instantiate GMAE.

We first instantiate $\mathcal R$ with the binary reward
$\mathcal R_{\mathrm{bin}}(\mathbf y_i\mid\xi)
=\mathbbm{1}[\mathcal A(\mathbf y_i)=\xi]$,
where $\xi$ denotes the reward reference.
We adopt it for three reasons.
First, it follows the standard reward design in RLVR.
Second, it restricts $\mathcal V_i,\widehat{\mathcal V}_i\subseteq\{0,1\}$, and hence $|\widehat{\mathcal V}|\leq2^G$, minimizing the reward-value space enumerated in \textcolor{deepred}{Eq.\ref{eq:gmae-expected-advantage}}.
Third, $|\mathcal V_i|\leq2$ tightens the approximation bound in \textcolor{deepred}{Eq.\ref{eq:bound}}.
With $\mathcal R_{\mathrm{bin}}$ fixed, we combine it with three representative $f$, yielding three corresponding GMAE instantiations:

\vspace{-0.1in}
\paragraph{GMAE$_0$: Standard Voting.}
Following TTRL~\colorcitep{zuo2026ttrl}, GMAE$_0$ constructs the pseudo-label by majority voting~\colorcitep{wang2023selfconsistency} over responses sampled from the current policy:

\vspace{-0.2in}
\begin{equation}
\xi_0
=
f_0(\mathcal G^{\otimes G})
=
\arg\max_{a}
\sum_{j=1}^{G}
\mathbbm{1}\!\left[
\mathcal A(\mathbf y_j)=a
\right].
\label{eq:gmae-zero}
\end{equation}
\vspace{-0.1in}

\vspace{-0.15in}
\paragraph{GMAE$_{\mathrm{CR}}$: Co-Refinement.}
Following Co-Reward~\colorcitep{zhang2026corewarding}, GMAE$_{\mathrm{CR}}$ defines the group context $\mathcal H$ as $G$ responses sampled from a slowly updated co-teacher and constructs the reward reference by majority voting over this context, where $\alpha_t$ denotes the teacher retention coefficient:

\vspace{-0.2in}
\begin{equation}
\resizebox{0.92\columnwidth}{!}{$
\displaystyle
\widetilde{\theta}_t
=
\alpha_t\widetilde{\theta}_{t-1}
+
(1-\alpha_t)\theta_{\mathrm{old},t},
\quad
\widetilde{\mathcal H}
=
\{\widetilde{\mathbf y}_j\}_{j=1}^{G}
\sim
\pi_{\widetilde{\theta}_t}^{\otimes G}(\cdot\mid\mathbf x),
\quad
f_{\mathrm{CR}}\!\left(\widetilde{\mathcal H}\right)
=
\arg\max_a
\sum_{j=1}^{G}
\mathbbm{1}\!\left[
\mathcal A(\widetilde{\mathbf y}_j)=a
\right].
$}
\label{eq:gmae-cr}
\end{equation}
\vspace{-0.1in}

\vspace{-0.15in}
\paragraph{GMAE$_{\mathrm{SR}}$: Self-Refinement.}
Following SR-TTRL~\colorcitep{wu2026srttrl}, GMAE$_{\mathrm{SR}}$ uses self-reflective verification instead of voting.
For each distinct answer $a$, it summarizes a representative response and constructs the reward reference through self-reflective comparison:

\vspace{-0.2in}
\begin{equation}
\begin{aligned}
&\mathbf s_a
=
\operatorname{Summ}_{\pi_{\theta_{\mathrm{old}}}}
\left(
\mathbf x,
\operatorname{Rep}
\left\{
\mathbf y_j
\mid
\mathcal A(\mathbf y_j)=a
\right\}
\right),
\qquad
a\in\mathcal U,
\qquad
\mathcal U
=
\{\mathcal A(\mathbf y_j)\}_{j=1}^{G}.
\\
&\xi_{\mathrm{SR}}
=
f_{\mathrm{SR}}(\mathcal G^{\otimes G})
=
\operatorname{Reflect}_{\pi_{\theta_{\mathrm{old}}}}
\left(
\mathbf x,
\{(a,\mathbf s_a)\}_{a\in\mathcal U}
\right).
\end{aligned}
\label{eq:gmae-sr}
\end{equation}
\vspace{-0.2in}

Since our core contribution lies in the general GMAE strategy, we only briefly introduce the self-rewarding rules used in these three instantiations here.
Their detailed {\bf pseudocode} and {\bf computational complexity analyses} are deferred to \textcolor{deepred}{Appendix~\ref{app:gmae-instantiations}}.
\vspace{-0.05in}
\section{Experiments}

\newcommand{\stdv}[2]{#1\ensuremath{_{\scriptscriptstyle #2}}}

\begin{table*}[t]
\vspace{-0.in}
\caption{
\textbf{Main Results Across Models and Domains:}
\textsc{Mean@16} scores on eight benchmarks across four base models.
\textit{Average} is computed over all eight benchmarks.
{\bf Bold} and \underline{underlined} values denote the {\bf best} and \underline{second-best} results among all self-rewarding methods, respectively.
}
\vspace{-0.1in}
\centering
\footnotesize
\renewcommand\arraystretch{1.05}
\setlength{\tabcolsep}{1.mm}

\resizebox{\textwidth}{!}{
\begin{tabular}{lccccccccc@{\hspace{4mm}}ccccccccc}

\toprule

\multirow{2}{*}{\textbf{Method}}
& \multicolumn{5}{c}{\textbf{Mathematics}}
& \textbf{Science}
& \textbf{Knowledge}
& \textbf{Coding}
& \multirow{2}{*}{\textbf{\textit{Average}}}
&
\multicolumn{5}{c}{\textbf{Mathematics}}
& \textbf{Science}
& \textbf{Knowledge}
& \textbf{Coding}
& \multirow{2}{*}{\textbf{\textit{Average}}}
\\

\cmidrule(lr){2-6}
\cmidrule(lr){7-7}
\cmidrule(lr){8-8}
\cmidrule(lr){9-9}
\cmidrule(lr){11-15}
\cmidrule(lr){16-16}
\cmidrule(lr){17-17}
\cmidrule(lr){18-18}

& {\bf MATH500}
& {\bf AMC}
& {\bf AIME24}
& {\bf AIME25}
& {\bf AIME26}
& {\bf GPQA}
& {\bf MMLU-Pro}
& {\bf LiveCode}
&
& {\bf MATH500}
& {\bf AMC}
& {\bf AIME24}
& {\bf AIME25}
& {\bf AIME26}
& {\bf GPQA}
& {\bf MMLU-Pro}
& {\bf LiveCode}
\\

\cmidrule(lr){2-19}

& \multicolumn{9}{c}{\texttt{\textbf{Qwen3-1.7B-Base}}}
& \multicolumn{9}{c}{\texttt{\textbf{Qwen3-4B-Base}}}
\\

\midrule

Raw Model
& \stdv{45.7}{0.22} & \stdv{17.0}{0.09} & \stdv{2.3}{0.07} & \stdv{0.8}{0.18} & \stdv{1.3}{0.19} & \stdv{7.9}{0.23} & \stdv{13.3}{0.29} & \stdv{6.7}{0.05} & \stdv{11.9}{0.07}
& \stdv{50.5}{0.06} & \stdv{24.3}{0.18} & \stdv{8.0}{0.20} & \stdv{3.8}{0.11} & \stdv{5.4}{0.05} & \stdv{15.8}{0.27} & \stdv{14.2}{0.17} & \stdv{15.3}{0.06} & \stdv{17.2}{0.09} \\

{\it w/ Verifiable Reward}
& \stdv{{\it 64.6}}{0.08} & \stdv{{\it 27.1}}{0.13} & \stdv{{\it 7.1}}{0.20} & \stdv{{\it 5.4}}{0.18} & \stdv{{\it 4.2}}{0.23} & \stdv{{\it 22.6}}{0.24} & \stdv{{\it 31.0}}{0.23} & \stdv{{\it 14.2}}{0.07} & \stdv{{\it 22.0}}{0.17}
& \stdv{{\it 81.3}}{0.16} & \stdv{{\it 48.0}}{0.25} & \stdv{{\it 19.3}}{0.03} & \stdv{{\it 17.7}}{0.11} & \stdv{{\it 14.8}}{0.18} & \stdv{{\it 33.0}}{0.17} & \stdv{{\it 39.4}}{0.32} & \stdv{{\it 19.1}}{0.08} & \stdv{{\it 34.1}}{0.13} \\

\hdashline

Intuitor$^*$
& \stdv{52.1}{0.14} & \stdv{19.8}{0.10} & \stdv{1.1}{0.24} & \stdv{1.2}{0.13} & \stdv{0.8}{0.18} & \stdv{13.3}{0.24} & \stdv{15.7}{0.30} & \stdv{10.7}{0.22} & \stdv{14.3}{0.21}
& \stdv{67.1}{0.18} & \stdv{32.9}{0.04} & \stdv{9.2}{0.21} & \stdv{2.7}{0.20} & \stdv{0.7}{0.20} & \stdv{30.0}{0.22} & \stdv{41.1}{0.23} & \stdv{15.8}{0.08} & \stdv{24.9}{0.21} \\

EM-RL$^*$
& \stdv{57.5}{0.08} & \stdv{19.2}{0.23} & \stdv{2.6}{0.18} & \stdv{1.7}{0.13} & \stdv{0.9}{0.22} & \stdv{13.2}{0.32} & \stdv{14.8}{0.19} & \stdv{11.6}{0.08} & \stdv{15.2}{0.06}
& \stdv{70.9}{0.19} & \stdv{30.5}{0.18} & \stdv{4.4}{0.09} & \stdv{6.3}{0.04} & \stdv{2.6}{0.08} & \stdv{28.3}{0.19} & \stdv{40.8}{0.25} & \stdv{16.3}{0.15} & \stdv{25.0}{0.21} \\

TTRL
& \stdv{61.1}{0.13} & \stdv{24.0}{0.22} & \stdv{3.4}{0.24} & \stdv{2.7}{0.07} & \stdv{1.1}{0.04} & \stdv{20.2}{0.22} & \stdv{23.7}{0.22} & \stdv{11.3}{0.24} & \stdv{18.4}{0.15}
& \stdv{74.6}{0.23} & \stdv{37.8}{0.04} & \stdv{8.8}{0.11} & \stdv{5.0}{0.19} & \stdv{5.3}{0.11} & \stdv{32.9}{0.16} & \stdv{35.4}{0.30} & \stdv{16.0}{0.14} & \stdv{27.0}{0.05} \\

CoVo
& \stdv{52.6}{0.08} & \stdv{19.0}{0.16} & \stdv{1.7}{0.11} & \stdv{1.1}{0.22} & \stdv{1.2}{0.06} & \stdv{12.8}{0.24} & \stdv{17.1}{0.25} & \stdv{10.1}{0.21} & \stdv{14.5}{0.12}
& \stdv{60.0}{0.19} & \stdv{29.5}{0.11} & \stdv{7.9}{0.07} & \stdv{5.6}{0.04} & \stdv{4.8}{0.03} & \stdv{28.0}{0.20} & \stdv{31.9}{0.14} & \stdv{15.2}{0.10} & \stdv{22.9}{0.21} \\

SCOPE
& \stdv{56.2}{0.07} & \stdv{22.3}{0.10} & \stdv{4.0}{0.22} & \stdv{2.5}{0.07} & \stdv{1.5}{0.05} & \stdv{12.0}{0.18} & \stdv{13.6}{0.19} & \stdv{8.4}{0.22} & \stdv{15.1}{0.21}
& \stdv{73.8}{0.24} & \stdv{38.1}{0.05} & \stdv{9.8}{0.22} & \stdv{6.5}{0.21} & \stdv{6.1}{0.10} & \stdv{31.6}{0.31} & \stdv{30.2}{0.14} & \stdv{16.6}{0.22} & \stdv{26.6}{0.03} \\

Self-Harmony
& \stdv{60.1}{0.08} & \stdv{22.0}{0.21} & \stdv{4.0}{0.20} & \stdv{1.5}{0.12} & \stdv{1.5}{0.21} & \stdv{20.3}{0.23} & \stdv{\textbf{27.4}}{0.12} & \stdv{11.4}{0.19} & \stdv{18.5}{0.12}
& \stdv{73.6}{0.12} & \stdv{33.6}{0.18} & \stdv{8.4}{0.23} & \stdv{5.3}{0.17} & \stdv{5.3}{0.08} & \stdv{\underline{34.8}}{0.15} & \stdv{\underline{41.7}}{0.31} & \stdv{16.2}{0.25} & \stdv{27.4}{0.10} \\

Co-Reward
& \stdv{61.2}{0.23} & \stdv{23.6}{0.21} & \stdv{4.0}{0.05} & \stdv{2.6}{0.21} & \stdv{1.1}{0.15} & \stdv{18.9}{0.27} & \stdv{23.2}{0.27} & \stdv{12.3}{0.25} & \stdv{18.4}{0.06}
& \stdv{74.4}{0.06} & \stdv{38.0}{0.13} & \stdv{9.4}{0.19} & \stdv{4.8}{0.12} & \stdv{5.7}{0.15} & \stdv{33.7}{0.25} & \stdv{33.8}{0.13} & \stdv{\underline{17.2}}{0.09} & \stdv{27.1}{0.10} \\

RESTRAIN
& \stdv{58.3}{0.17} & \stdv{22.3}{0.16} & \stdv{3.6}{0.10} & \stdv{2.1}{0.15} & \stdv{1.9}{0.19} & \stdv{16.2}{0.25} & \stdv{19.8}{0.30} & \stdv{11.7}{0.16} & \stdv{17.0}{0.11}
& \stdv{71.0}{0.24} & \stdv{37.2}{0.04} & \stdv{8.2}{0.23} & \stdv{5.5}{0.17} & \stdv{4.9}{0.12} & \stdv{31.0}{0.28} & \stdv{35.2}{0.15} & \stdv{16.9}{0.23} & \stdv{26.2}{0.09} \\

SR-TTRL
& \stdv{60.9}{0.08} & \stdv{24.3}{0.23} & \stdv{4.1}{0.12} & \stdv{2.5}{0.11} & \stdv{0.7}{0.21} & \stdv{20.2}{0.25} & \stdv{24.7}{0.19} & \stdv{11.9}{0.24} & \stdv{18.7}{0.18}
& \stdv{74.3}{0.16} & \stdv{38.7}{0.13} & \stdv{9.8}{0.09} & \stdv{5.9}{0.23} & \stdv{5.5}{0.21} & \stdv{32.1}{0.19} & \stdv{31.0}{0.18} & \stdv{16.6}{0.09} & \stdv{26.7}{0.19} \\

\rowcolor{gray!20}
\textbf{GMAE$_{\text{0}}$ (Ours)}
& \stdv{63.0}{0.11} & \stdv{25.8}{0.06} & \stdv{5.5}{0.09} & \stdv{4.2}{0.04} & \stdv{2.5}{0.12} & \stdv{\textbf{22.2}}{0.18} & \stdv{25.4}{0.22} & \stdv{14.3}{0.07} & \stdv{20.4}{0.10}
& \stdv{\underline{75.7}}{0.05} & \stdv{39.7}{0.12} & \stdv{10.9}{0.08} & \stdv{\underline{8.4}}{0.06} & \stdv{6.5}{0.10} & \stdv{34.1}{0.15} & \stdv{38.8}{0.21} & \stdv{17.1}{0.09} & \stdv{28.9}{0.07} \\

\rowcolor{gray!20}
\textbf{GMAE$_{\text{CR}}$ (Ours)}
& \stdv{\textbf{63.8}}{0.06} & \stdv{\textbf{26.6}}{0.10} & \stdv{\underline{6.4}}{0.05} & \stdv{\textbf{5.0}}{0.08} & \stdv{\underline{3.0}}{0.11} & \stdv{\underline{22.0}}{0.14} & \stdv{25.0}{0.19} & \stdv{\underline{14.5}}{0.12} & \stdv{\underline{20.8}}{0.06}
& \stdv{\textbf{76.2}}{0.09} & \stdv{\underline{40.2}}{0.05} & \stdv{\underline{11.5}}{0.11} & \stdv{8.1}{0.07} & \stdv{\textbf{7.8}}{0.04} & \stdv{\textbf{35.3}}{0.17} & \stdv{\textbf{43.4}}{0.20} & \stdv{\textbf{18.3}}{0.10} & \stdv{\textbf{30.1}}{0.08} \\

\rowcolor{gray!20}
\textbf{GMAE$_{\text{SR}}$ (Ours)}
& \stdv{\underline{63.6}}{0.08} & \stdv{\underline{26.4}}{0.05} & \stdv{\textbf{6.9}}{0.12} & \stdv{\underline{4.6}}{0.09} & \stdv{\textbf{3.3}}{0.06} & \stdv{\textbf{22.2}}{0.20} & \stdv{\underline{26.3}}{0.16} & \stdv{\textbf{14.8}}{0.04} & \stdv{\textbf{21.0}}{0.11}
& \stdv{\underline{75.7}}{0.12} & \stdv{\textbf{41.2}}{0.08} & \stdv{\textbf{12.0}}{0.06} & \stdv{\textbf{9.2}}{0.10} & \stdv{\underline{7.0}}{0.05} & \stdv{34.3}{0.22} & \stdv{40.4}{0.18} & \stdv{17.5}{0.07} & \stdv{\underline{29.7}}{0.09} \\

\midrule

& \multicolumn{9}{c}{\texttt{\textbf{Qwen3-8B-Base}}}
& \multicolumn{9}{c}{\texttt{\textbf{Llama3.1-8B-Instruct}}}
\\

\midrule

Raw Model
& \stdv{68.5}{0.19} & \stdv{35.4}{0.11} & \stdv{11.3}{0.20} & \stdv{7.3}{0.07} & \stdv{6.1}{0.19} & \stdv{31.0}{0.17} & \stdv{42.8}{0.14} & \stdv{19.5}{0.06} & \stdv{27.7}{0.12}
& \stdv{47.3}{0.14} & \stdv{21.3}{0.06} & \stdv{7.2}{0.15} & \stdv{1.5}{0.13} & \stdv{1.2}{0.25} & \stdv{17.4}{0.13} & \stdv{21.5}{0.29} & \stdv{9.2}{0.14} & \stdv{15.8}{0.20} \\

{\it w/ Verifiable Reward}
& \stdv{\textit{83.1}}{0.25} & \stdv{\textit{49.4}}{0.09} & \stdv{\textit{20.4}}{0.16} & \stdv{\textit{19.1}}{0.20} & \stdv{\textit{16.3}}{0.17} & \stdv{\textit{42.9}}{0.15} & \stdv{\textit{71.2}}{0.30} & \stdv{\textit{26.0}}{0.09} & \stdv{\textit{40.9}}{0.11}
& \stdv{\textit{69.4}}{0.11} & \stdv{\textit{41.2}}{0.07} & \stdv{\textit{16.2}}{0.24} & \stdv{\textit{7.2}}{0.14} & \stdv{\textit{7.0}}{0.04} & \stdv{\textit{25.8}}{0.28} & \stdv{\textit{35.5}}{0.26} & \stdv{\textit{16.5}}{0.07} & \stdv{\textit{27.4}}{0.16} \\

\hdashline

Intuitor$^*$
& \stdv{71.1}{0.03} & \stdv{37.8}{0.04} & \stdv{11.9}{0.15} & \stdv{9.7}{0.23} & \stdv{6.8}{0.24} & \stdv{35.2}{0.14} & \stdv{60.6}{0.14} & \stdv{19.8}{0.05} & \stdv{31.6}{0.17}
& \stdv{52.1}{0.06} & \stdv{27.6}{0.11} & \stdv{7.7}{0.19} & \stdv{2.7}{0.06} & \stdv{1.6}{0.06} & \stdv{19.9}{0.17} & \stdv{27.6}{0.32} & \stdv{10.2}{0.05} & \stdv{18.7}{0.21} \\

EM-RL$^*$
& \stdv{73.4}{0.12} & \stdv{40.4}{0.25} & \stdv{13.1}{0.22} & \stdv{9.8}{0.25} & \stdv{7.9}{0.10} & \stdv{35.8}{0.29} & \stdv{65.3}{0.22} & \stdv{20.5}{0.09} & \stdv{33.3}{0.22}
& \stdv{58.9}{0.15} & \stdv{34.2}{0.18} & \stdv{9.0}{0.07} & \stdv{3.5}{0.17} & \stdv{2.2}{0.04} & \stdv{20.3}{0.24} & \stdv{26.7}{0.27} & \stdv{13.1}{0.07} & \stdv{21.0}{0.17} \\

TTRL
& \stdv{77.8}{0.25} & \stdv{43.6}{0.03} & \stdv{13.1}{0.14} & \stdv{10.5}{0.09} & \stdv{6.1}{0.16} & \stdv{39.7}{0.24} & \stdv{64.1}{0.14} & \stdv{20.9}{0.17} & \stdv{34.5}{0.04}
& \stdv{63.8}{0.24} & \stdv{36.6}{0.10} & \stdv{11.6}{0.16} & \stdv{5.0}{0.19} & \stdv{4.1}{0.21} & \stdv{21.1}{0.30} & \stdv{31.8}{0.13} & \stdv{14.3}{0.12} & \stdv{23.5}{0.17} \\

CoVo
& \stdv{69.9}{0.11} & \stdv{37.1}{0.13} & \stdv{11.9}{0.09} & \stdv{9.1}{0.15} & \stdv{7.0}{0.06} & \stdv{32.3}{0.13} & \stdv{52.8}{0.16} & \stdv{20.4}{0.04} & \stdv{30.1}{0.05}
& \stdv{51.3}{0.15} & \stdv{29.5}{0.25} & \stdv{10.2}{0.14} & \stdv{2.4}{0.17} & \stdv{1.9}{0.05} & \stdv{18.5}{0.22} & \stdv{29.3}{0.18} & \stdv{12.4}{0.16} & \stdv{19.4}{0.13} \\

SCOPE
& \stdv{77.5}{0.23} & \stdv{43.3}{0.07} & \stdv{12.5}{0.20} & \stdv{11.5}{0.16} & \stdv{7.1}{0.22} & \stdv{38.6}{0.30} & \stdv{59.3}{0.24} & \stdv{19.5}{0.10} & \stdv{33.7}{0.15}
& \stdv{60.3}{0.15} & \stdv{32.8}{0.13} & \stdv{12.1}{0.04} & \stdv{4.2}{0.08} & \stdv{3.0}{0.09} & \stdv{20.0}{0.22} & \stdv{30.3}{0.21} & \stdv{11.9}{0.22} & \stdv{21.8}{0.17} \\

Self-Harmony
& \stdv{76.3}{0.08} & \stdv{41.1}{0.22} & \stdv{12.7}{0.15} & \stdv{11.5}{0.18} & \stdv{6.5}{0.10} & \stdv{40.7}{0.30} & \stdv{\textbf{70.5}}{0.21} & \stdv{21.9}{0.16} & \stdv{35.2}{0.11}
& \stdv{61.7}{0.15} & \stdv{33.6}{0.09} & \stdv{8.0}{0.16} & \stdv{3.9}{0.19} & \stdv{1.8}{0.06} & \stdv{21.8}{0.30} & \stdv{28.4}{0.14} & \stdv{13.0}{0.24} & \stdv{21.5}{0.08} \\

Co-Reward
& \stdv{78.0}{0.16} & \stdv{43.5}{0.15} & \stdv{13.4}{0.06} & \stdv{10.4}{0.05} & \stdv{6.9}{0.25} & \stdv{40.6}{0.23} & \stdv{65.3}{0.14} & \stdv{21.1}{0.15} & \stdv{34.9}{0.17}
& \stdv{65.1}{0.12} & \stdv{36.8}{0.09} & \stdv{12.1}{0.19} & \stdv{5.3}{0.20} & \stdv{4.7}{0.09} & \stdv{21.6}{0.27} & \stdv{32.4}{0.24} & \stdv{15.3}{0.19} & \stdv{24.2}{0.23} \\

RESTRAIN
& \stdv{77.3}{0.19} & \stdv{44.8}{0.10} & \stdv{13.8}{0.17} & \stdv{9.8}{0.10} & \stdv{8.1}{0.12} & \stdv{39.0}{0.12} & \stdv{65.0}{0.21} & \stdv{20.7}{0.23} & \stdv{34.8}{0.20}
& \stdv{64.2}{0.11} & \stdv{30.6}{0.21} & \stdv{10.9}{0.23} & \stdv{4.3}{0.14} & \stdv{4.0}{0.18} & \stdv{20.7}{0.20} & \stdv{30.8}{0.31} & \stdv{13.9}{0.11} & \stdv{22.4}{0.16} \\

SR-TTRL
& \stdv{77.3}{0.16} & \stdv{44.5}{0.18} & \stdv{13.8}{0.18} & \stdv{11.1}{0.11} & \stdv{7.2}{0.17} & \stdv{39.2}{0.31} & \stdv{63.7}{0.31} & \stdv{20.1}{0.14} & \stdv{34.6}{0.06}
& \stdv{65.0}{0.06} & \stdv{36.3}{0.13} & \stdv{12.1}{0.15} & \stdv{5.3}{0.25} & \stdv{5.5}{0.15} & \stdv{22.0}{0.30} & \stdv{31.6}{0.29} & \stdv{13.5}{0.06} & \stdv{23.9}{0.12} \\

\rowcolor{gray!20}
\textbf{GMAE$_{\text{0}}$ (Ours)}
& \stdv{79.8}{0.05}
& \stdv{46.2}{0.11}
& \stdv{16.2}{0.07}
& \stdv{12.5}{0.13}
& \stdv{\underline{9.2}}{0.06}
& \stdv{\textbf{41.7}}{0.19}
& \stdv{67.4}{0.23}
& \stdv{\underline{23.5}}{0.08}
& \stdv{\underline{37.1}}{0.10}
& \stdv{66.5}{0.12}
& \stdv{38.4}{0.06}
& \stdv{13.2}{0.09}
& \stdv{\underline{6.5}}{0.04}
& \stdv{5.8}{0.11}
& \stdv{22.9}{0.16}
& \stdv{32.8}{0.21}
& \stdv{15.4}{0.05}
& \stdv{25.2}{0.09} \\

\rowcolor{gray!20}
\textbf{GMAE$_{\text{CR}}$ (Ours)}
& \stdv{\textbf{81.6}}{0.09}
& \stdv{\textbf{47.0}}{0.04}
& \stdv{\textbf{17.4}}{0.10}
& \stdv{\textbf{13.5}}{0.06}
& \stdv{\textbf{9.5}}{0.12}
& \stdv{\underline{41.3}}{0.15}
& \stdv{\underline{69.2}}{0.18}
& \stdv{\textbf{24.3}}{0.11}
& \stdv{\textbf{38.0}}{0.07}
& \stdv{\underline{67.7}}{0.06}
& \stdv{\textbf{39.8}}{0.10}
& \stdv{\textbf{14.0}}{0.05}
& \stdv{\textbf{6.6}}{0.08}
& \stdv{\textbf{6.4}}{0.13}
& \stdv{\textbf{23.9}}{0.17}
& \stdv{\textbf{34.2}}{0.20}
& \stdv{\textbf{16.7}}{0.09}
& \stdv{\textbf{26.2}}{0.06}
\\

\rowcolor{gray!20}
\textbf{GMAE$_{\text{SR}}$ (Ours)}
& \stdv{\underline{80.4}}{0.07}
& \stdv{\underline{46.4}}{0.12}
& \stdv{\underline{16.5}}{0.04}
& \stdv{\underline{13.1}}{0.09}
& \stdv{8.6}{0.05}
& \stdv{40.7}{0.21}
& \stdv{66.6}{0.17}
& \stdv{22.6}{0.06}
& \stdv{36.9}{0.11}
& \stdv{\textbf{68.0}}{0.10}
& \stdv{\underline{39.2}}{0.05}
& \stdv{\underline{13.6}}{0.12}
& \stdv{6.1}{0.07}
& \stdv{\underline{6.0}}{0.09}
& \stdv{\underline{23.3}}{0.19}
& \stdv{\underline{33.7}}{0.16}
& \stdv{\underline{16.2}}{0.11}
& \stdv{\underline{25.8}}{0.08}
\\

\bottomrule

\end{tabular}
}

\label{tab:main}
\vspace{-0.2in}

\end{table*}

\vspace{-0.05in}
\subsection{Setup}

\paragraph{Models.}
We conduct our main experiments on \texttt{Qwen3-1.7B-Base}, \texttt{Qwen3-4B-Base}, and \texttt{Qwen3-8B-Base}~\colorcitep{yang2025qwen3} to evaluate GMAE across model scales.
We also include \texttt{Llama3.1-8B-Instruct}~\colorcitep{grattafiori2024llama} to assess its adaptability across model families.

\vspace{-0.1in}
\paragraph{Datasets.}
For training, we use the math dataset \texttt{DAPO-14K}~\colorcitep{yu2026dapo}.
For evaluation, we select eight benchmarks across mathematics, science, knowledge, and coding domains.
For mathematics, we use \texttt{MATH500}~\colorcitep{hendrycks2021measuring}, \texttt{AMC}, \texttt{AIME24}, \texttt{AIME25}, and \texttt{AIME26}.
The out-of-domain (OOD) benchmarks include \texttt{GPQA}~\colorcitep{rein2023gpqa} for science, \texttt{MMLU-Pro}~\colorcitep{wang2024mmlu} for knowledge, and \texttt{LiveCodeBench-v6}~\colorcitep{jain2024livecodebench} for coding.

\vspace{-0.1in}
\paragraph{Baselines.}
We primarily compare GMAE with {\it ensemble-based} methods, including TTRL~\colorcitep{zuo2026ttrl}, CoVo~\colorcitep{zhang2026consistent}, SCOPE~\colorcitep{wang2026beyond}, Self-Harmony~\colorcitep{wang2026selfharmony}, RESTRAIN~\colorcitep{yu2026restrain}, Co-Reward~\colorcitep{zhang2026corewarding}, and SR-TTRL~\colorcitep{wu2026srttrl}.
Detailed descriptions are provided in \textcolor{deepred}{Appendix~\ref{app:baseline-methods}}.
We also include two {\it probability-based} methods: Intuitor~\colorcitep{zhao2025learning}, which uses self-certainty as its reward, and EM-RL~\colorcitep{agarwal2026unreasonable}, which uses trajectory entropy as its reward.
We further include an RLVR reference to measure the remaining gap to verifiable supervision.
We faithfully reproduce all baselines using official implementations when available; otherwise, we implement them based on the original papers.

\vspace{-0.1in}
\paragraph{Implementation.}
We adopt GRPO~\colorcitep{shao2024deepseekmath} as the RL backbone and train each base model for one epoch with batch size $64$.
We use a learning rate of $1\times10^{-6}$ with $5\%$ warmup followed by cosine decay~\colorcitep{loshchilov2017sgdr}.
For each prompt, we sample $G=16$ main and $G'=16$ auxiliary responses with temperature $T=1.0$ and no top-$p$ or top-$k$ truncation, forming a candidate pool of $32$ responses.
All $32$ responses are generated fully in parallel during rollout.
We implement all experiments with \texttt{verl} on NVIDIA A100 80GB GPUs.

\vspace{-0.1in}
\paragraph{Evaluation.}
For each evaluation dataset, we report \textsc{Mean@16}, defined as the average \textsc{Pass@1} accuracy over 16 sampled responses per problem. The subscript reports the standard deviation of per-problem \textsc{Pass@1} outcomes aggregated over the evaluation set.
We use temperature $T=0.6$, top-$p=0.95$, and top-$k=20$ for sampling.
During training, we evaluate checkpoints on \texttt{DAPO-500}~\colorcitep{yu2026dapo} every $20$ steps and at training end, selecting the checkpoint with the highest validation \textsc{Mean@16} for final benchmarking.

\vspace{-0.1in}
\paragraph{Fair Cost Control.}
Training cost is reflected in two aspects: {\bf GPU memory} and {\bf training duration}.
For GPU memory, all methods use $G=16$ responses for policy updates.
For a fair comparison, the baselines also generate $32$ responses per prompt, with the additional responses used only for reward-reference construction. 
This enlarged rollout scheme has been shown to improve performance~\colorcitep{zuo2026ttrl}.
For training duration, it varies with the computation required by each reward construction and is discussed separately in \textcolor{deepred}{Section~\ref{sec:training-behavior}}.

\vspace{-0.05in}
\subsection{Main Results}

\paragraph{\textcolor{deepred}{Overall Comparisons:} Robust Improvement and Generalization.}
In \textcolor{deepred}{Table \ref{tab:main}}, GMAE outperforms existing self-rewarding baselines across 30 of 32 model-benchmark combinations.
It achieves improvements of 2.0-2.8 points over the strongest baselines, corresponding to 8.0\%-12.3\% relative gains across different models.
Notably, GMAE approaches the performance of RLVR with verifiable rewards. For example, the performance gap on \texttt{Qwen3-1.7B-Base} is reduced to only 1.0 point.
Moreover, although trained exclusively on mathematics, GMAE maintains strong performance on OOD benchmarks, demonstrating its broad generalization ability.

\vspace{-0.1in}
\paragraph{\textcolor{deepred}{Controlled Comparisons:} GMAE as a Strong General Strategy.}
To isolate the contribution of GMAE from the underlying self-rewarding rules, we compare each GMAE instantiation with its base version: GMAE$_0$ with TTRL, GMAE$_{\mathrm{CR}}$ with Co-Reward, and GMAE$_{\mathrm{SR}}$ with SR-TTRL.
Across all comparisons, GMAE consistently improves the corresponding methods, yielding average relative gains of 8.2\%, 10.3\%, and 9.5\%, respectively.
Meanwhile, the relative ranking among different self-rewarding rules is largely preserved after applying GMAE, indicating that GMAE improves reward utilization while retaining their individual strengths.
Notably, even the weakest GMAE instantiation consistently outperforms the strongest non-GMAE baseline across all evaluated models.
These results demonstrate that the improvement comes from the GMAE strategy for leveraging self-reward signals, rather than from any specific self-rewarding rule alone.

\vspace{-0.05in}
\subsection{Detailed Training Behavior}
\label{sec:training-behavior}

\paragraph{\textcolor{deepred}{Training Dynamic:} Stability Analysis.}
Stable training dynamics are essential for sustained self-improvement.
\textcolor{deepred}{Figure~\ref{fig:dynamic}} tracks performance on the \texttt{DAPO-500} validation set throughout training.
Several baselines improve initially but then degrade sharply, exhibiting a ``rise-then-collapse'' pattern.
Notably, both probability-based methods show this behavior, suggesting that probability-based supervision can be unreliable~\colorcitep{zhang2025no}.
In contrast, GMAE improves steadily and converges without collapse, demonstrating substantially more reliable training dynamics.

\begin{figure}[H]
\vspace{-0.15in}
    \centering
    \includegraphics[width=\linewidth]{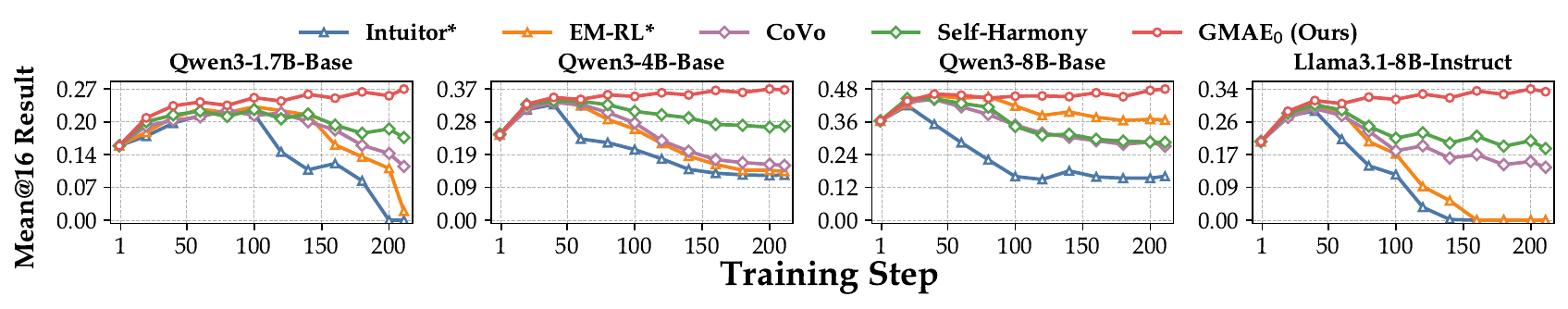}
    \vspace{-0.3in}
    \caption{
    \textbf{Training Dynamics} on the \texttt{DAPO-500} validation set over training steps.
    }
    \label{fig:dynamic}
\vspace{-0.2in}
\end{figure}

\begin{wrapfigure}{r}{0.45\textwidth}
    \centering
    \vspace{-0.17in}
    \includegraphics[width=0.45\textwidth]{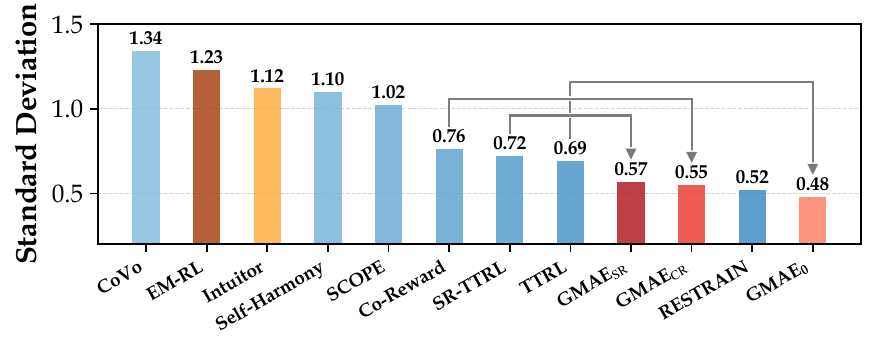}
    \vspace{-0.3in}
    \caption{
    \textbf{Multi-Seed Training Variance.}
    Results show the average \textsc{Mean@16} standard deviation across eight benchmarks for \texttt{Qwen3-8B-Base} over five training seeds.
    }
    \label{fig:variance}
    \vspace{-0.2in}
\end{wrapfigure}

\vspace{-0.1in}
\paragraph{\textcolor{deepred}{Training Robustness:} Multi-Seed Training Variance.}
As analyzed in \textcolor{deepred}{Section \ref{sec:existing-limitations}}, randomness in rollout sampling may flip the sign of the estimated advantage for the same response, substantially affecting policy updates. We therefore test whether GMAE produces more consistent outcomes across runs. We evaluate each method over five runs on \texttt{Qwen3-8B-Base}, keeping the hardware, initialization, training data, batch order, and hyperparameters fixed while varying only the rollout-sampling seed. \textcolor{deepred}{Figure~\ref{fig:variance}} reports the standard deviation of \textsc{Mean@16} across runs, averaged over eight benchmarks. GMAE$_0$ achieves the lowest variance at 0.48, with lower variance in every controlled comparison: TTRL $\rightarrow$ GMAE$_0$ (0.69 $\rightarrow$ 0.48), Co-Reward $\rightarrow$ GMAE$_{\text{CR}}$ (0.76 $\rightarrow$ 0.55), and SR-TTRL $\rightarrow$ GMAE$_{\text{SR}}$ (0.72 $\rightarrow$ 0.57). These results show that beyond improving absolute performance, GMAE also makes training more robust.

\begin{wrapfigure}{r}{0.43\textwidth}
    \centering
    \vspace{-0.2in}
    \includegraphics[
        width=0.43\textwidth
    ]{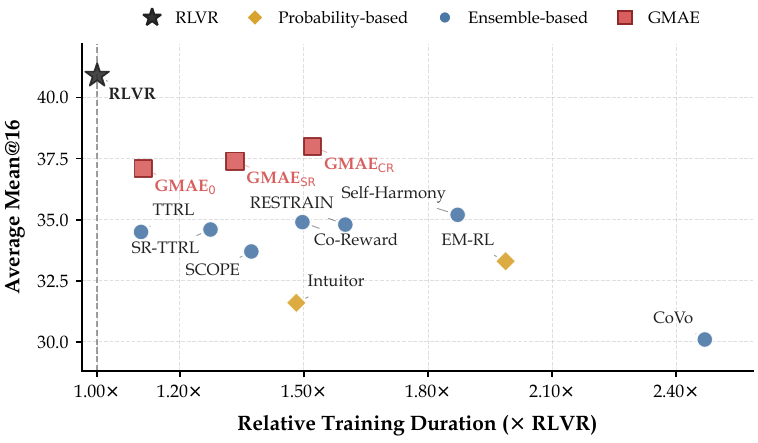}
    \vspace{-0.3in}
    \caption{
    \textbf{Training Duration vs. Performance.}
    Each point represents one method. Duration is normalized to RLVR, and performance is the average \textsc{Mean@16} across eight benchmarks on \texttt{Qwen3-8B-Base}.
    }
    \label{fig:training-cost}
    \vspace{-0.2in}
\end{wrapfigure}

\vspace{-0.1in}
\paragraph{\textcolor{deepred}{Training Duration:} Cost Analysis.}
With equal rollout sampling budgets, we compare end-to-end training time under the same hardware and settings, normalized to RLVR.
\textcolor{deepred}{Figure~\ref{fig:training-cost}} shows that all three GMAE instantiations remain within the low-to-moderate cost range of existing baselines and add little overhead over their corresponding base versions.
This limited overhead arises because {\it marginalization reuses cached responses and requires only lightweight scalar computation}.
For example, GMAE$_0$ introduces only $G(K-1)$ lightweight voting operations per prompt and at most $2^G$ scalar normalizations over TTRL, resulting in nearly identical training time.
Meanwhile, the more elaborate GMAE instantiations require additional computation from their underlying self-rewarding rules but achieve correspondingly stronger performance, yielding a favorable cost-performance trade-off.

\vspace{-0.05in}
\section{Extended Analysis}

\vspace{-0.05in}

\subsection{Why Does GMAE Work? A Correct-Response
Frequency Argument}
\label{sec:why-gmae-works}

The key motivation for GMAE is that a correct response may receive a positive reward only in certain group contexts, which a single realization may miss.
By retaining reward realizations across contexts, GMAE preserves even sparse positive evidence.
Extending this intuition to model performance, we initially expected GMAE to prevent rare correct responses from disappearing while keeping common ones frequent, as they are consistently rewarded across contexts.
We test this explanation by tracking correct-response frequencies throughout training.

\begin{wrapfigure}{r}{0.45\textwidth}
    \centering
    \vspace{-0.3in}
    \includegraphics[
        width=0.45\textwidth
    ]{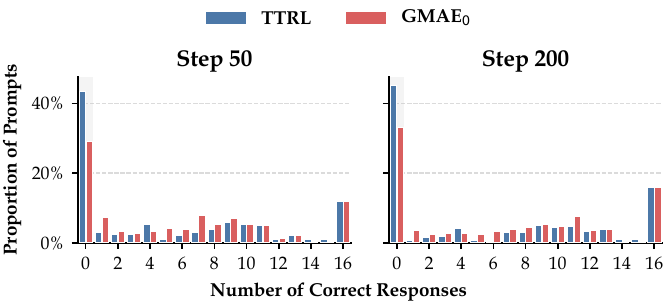}
    \vspace{-0.25in}
    \caption{
    \textbf{Distribution of Correct Responses per Prompt.}
    Illustrative proportions of training prompts by the number of correct responses among $G=16$ sampled responses under GMAE$_0$ and TTRL at steps 50 and 200.
    }
    \label{fig:correct-response-retention}
    \vspace{-0.2in}
\end{wrapfigure}

We compare illustrative count distributions for GMAE$_0$ and its non-marginalized version TTRL on \texttt{Qwen3-8B-Base} at steps 50 and 200, representing early and near-final stages of the 211-step training.
As shown in \textcolor{deepred}{Figure~\ref{fig:correct-response-retention}}, TTRL has more prompts with no correct responses, whereas GMAE$_0$ has more prompts with a small or moderate number of correct responses.
The two distributions have almost the same mass at 16 correct responses.
This pattern suggests that GMAE helps keep rare correct responses from disappearing while maintaining success on prompts that are already solved consistently.

\vspace{-0.05in}
\subsection{Ablation Study}

\begin{wrapfigure}{r}{0.45\textwidth}
    \centering
    \vspace{-0.4in}
    \includegraphics[width=0.45\textwidth]{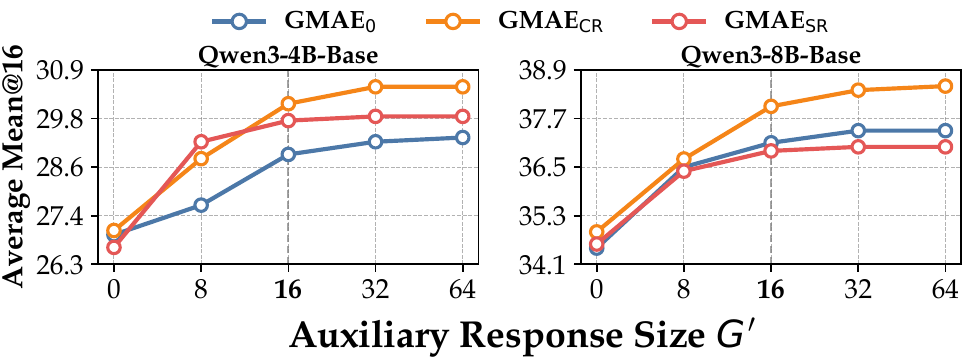}
    \vspace{-0.28in}
    \caption{
    \textbf{Auxiliary Response Size Ablation.}
    Average \textsc{Mean@16} across all eight benchmarks with different auxiliary response sizes.
    }
    \label{fig:auxiliary-size}
    \vspace{-0.2in}
\end{wrapfigure}

\paragraph{Auxiliary Response Size $G'$.}
\textcolor{deepred}{Eq.\ref{eq:bound}} establishes that the upper bound on the estimation error of $\hat{P}_{r_i}$ decreases with larger $G'$.
We validate this result empirically by varying $G'$ under otherwise identical configurations and performing multiple runs. When $G'=0$, the GMAE instantiation reduces to its base version.
As shown in \textcolor{deepred}{Figure~\ref{fig:auxiliary-size}}, increasing $G'$ consistently improves performance, but the marginal benefit drops markedly beyond $G'=16$.
This suggests that our default choice of $G'=16$ provides a strong balance between performance and rollout cost.

\vspace{-0.1in}
\paragraph{Advantage Scale Calibration Mechanism.}
\textcolor{deepred}{Section~\ref{sec:gmae}} theoretically motivates advantage scale calibration and explains why calibration should be applied at the batch rather than prompt level.
Now we empirically validate this design. \textcolor{deepred}{Table~\ref{tab:calibration}}  shows that batch-level calibration performs best on both models, whereas prompt-level calibration performs worst, even underperforming no calibration.

\begin{table}[H]
\vspace{-0.1in}
\caption{
\textbf{Calibration Ablation on GMAE$_0$.}
\textsc{Mean@16} results for \texttt{Qwen3-8B-Base}.
}
\label{tab:calibration-8b}
\vspace{-0.1in}
\centering
\footnotesize
\renewcommand{\arraystretch}{1.1}
\setlength{\tabcolsep}{1.5mm}

\resizebox{\linewidth}{!}{
\begin{tabular}{lccccccccc}
\toprule

& \textbf{MATH500}
& \textbf{AMC}
& \textbf{AIME24}
& \textbf{AIME25}
& \textbf{AIME26}
& \textbf{GPQA}
& \textbf{MMLU-Pro}
& \textbf{LiveCode}
& \textbf{\textit{Average}}
\\

\midrule

w/o Calibration
& 78.4 & 45.2 & 16.0 & 10.3 & \textbf{9.7} & 39.2 & 63.6
& 22.4 & 35.6 \\

w/ Prompt-level Calibration
& 78.0 & 44.7 & 15.5 & 10.8 & 8.6 & 39.4 & 63.8 & 21.3 & 35.3 \\

\rowcolor{gray!20}
{\bf w/ Batch-level Calibration (Ours)}
& \textbf{79.8} & \textbf{46.2} & \textbf{16.2} & \textbf{12.5}
& 9.2 & \textbf{41.7} & \textbf{67.4} & \textbf{23.5}
& \textbf{37.1} \\

\bottomrule
\end{tabular}
}
\vspace{-0.15in}
\label{tab:calibration}
\end{table}

\begin{wrapfigure}{r}{0.45\textwidth}
    \centering
    \vspace{-0.27in}
    \includegraphics[width=0.45\textwidth]{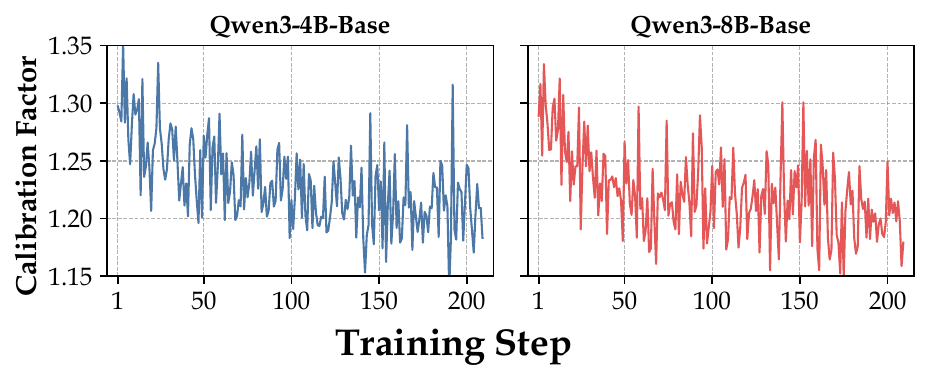}
    \vspace{-0.3in}
    \caption{
    \textbf{Calibration Factor $\lambda$ Dynamics.}
    }
    \label{fig:calibration}
    \vspace{-0.1in}
\end{wrapfigure}

\textcolor{deepred}{Figure~\ref{fig:calibration}} further shows how the calibration factor $\lambda$ (\textcolor{deepred}{Eq.\ref{eq:gmae-calibration}}) evolves during training.
Its value remains between 1.15 and 1.35, confirming that GMAE consistently requires moderate scale compensation.
The factor gradually decreases as training proceeds, indicating that the gap between the non-marginalized and marginalized advantage scales narrows over time.

\vspace{-0.05in}
\subsection{Scalability Study}

\begin{wraptable}{r}{0.45\textwidth}
\centering
\vspace{-0.17in}
\caption{
\textbf{Scalability Across Training Settings.}
Average \textsc{Mean@16} across all eight benchmarks for
\texttt{Qwen3-8B-Base}.
Full benchmark-wise results are in
\textcolor{deepred}{Appendix~\ref{app:scalability-results}}.
}
\label{tab:scalability}
\vspace{-0.08in}

\footnotesize
\renewcommand{\arraystretch}{1.1}
\setlength{\tabcolsep}{1.5mm}

\resizebox{\linewidth}{!}{
\begin{tabular}{lcccc}
\toprule

\multirow{2}{*}{\textbf{Method}}
& \multicolumn{2}{c}{\textbf{RL Backbones}}
& \multicolumn{2}{c}{\textbf{Training Datasets}}
\\

\cmidrule(lr){2-3}
\cmidrule(lr){4-5}

& \textbf{GSPO}
& \textbf{REINFORCE++}
& \textbf{\texttt{Open-RS}}
& \textbf{\texttt{MATH-8K}}
\\

\midrule

TTRL
& 35.8 & 32.8 & 34.0 & 33.2 \\

Self-Harmony
& 36.1 & 33.6 & 34.9 & 33.0 \\

Co-Reward
& 37.3 & 33.9 & 35.3 & 34.3 \\

SR-TTRL
& 36.8 & 33.6 & 35.2 & 34.3 \\

\rowcolor{gray!20}
\textbf{GMAE$_0$ (Ours)}
& 38.1 & 36.4 & 36.2 & 35.5 \\

\rowcolor{gray!20}
\textbf{GMAE$_{\text{CR}}$ (Ours)}
& \textbf{39.3}
& \textbf{37.5}
& \textbf{37.3}
& 36.2 \\

\rowcolor{gray!20}
\textbf{GMAE$_{\text{SR}}$ (Ours)}
& 38.7
& 37.1
& 36.6
& \textbf{36.5} \\

\bottomrule
\end{tabular}
}

\vspace{-0.2in}
\end{wraptable}

We evaluate GMAE's scalability along two dimensions: {\bf RL backbones} and {\bf training datasets}.
For the former, we replace the default GRPO backbone with GSPO~\colorcitep{zheng2025group} and REINFORCE++~\colorcitep{hu2025reinforce++}, keeping all other settings fixed.
For the latter, we train all methods on \texttt{Open-RS}~\colorcitep{dang2025reinforcement} and \texttt{MATH-8K}~\colorcitep{hendrycks2021measuring}, in addition to \texttt{DAPO-14K}.
As shown in \textcolor{deepred}{Table~\ref{tab:scalability}}, GMAE consistently retains its advantages over representative strong baselines from \textcolor{deepred}{Table~\ref{tab:main}}, demonstrating scalability across the two training settings.

\vspace{-0.1in}
\section{Related Work}

\paragraph{Self-Evolving LLMs.}
Self-evolving LLMs improve through self-generated experience~\colorcitep{tao2024selfevolution,gao2025selfevolving}.
They evolve data through self-generation or instruction evolution~\colorcitep{wang2023selfinstruct,xu2024wizardlm} and failure-targeted augmentation~\colorcitep{lee2024llm2llm}; reasoning through iterative refinement~\colorcitep{madaan2023selfrefine,shinn2023reflexion}, latent-rationale bootstrapping~\colorcitep{zelikman2022star,zelikman2024quietstar}, and structure discovery~\colorcitep{zhou2024selfdiscover}; and policies or tasks through reinforced self-training, self-play~\colorcitep{gulcehre2023rest,chen2024spin}, and proposer--solver co-evolution~\colorcitep{zhao2025absolutezero,huang2025rzero}.
Yet many require demonstrations, labels, environmental feedback, or verifiers.
We instead study \textit{zero-label} evolution from prompts alone, without ground-truth answers or external reward models.

\vspace{-0.1in}
\paragraph{Self-Rewarding RL.}
Self-rewarding RL uses self-generated supervision, extending RLAIF~\colorcitep{bai2022constitutional,lee2024rlaif} to closed self-evaluation loops~\colorcitep{yuan2024selfrewarding,wang2024selftaught,wu2024metarewarding}.
For reasoning, probability-based methods score individual rollouts using confidence or self-certainty~\colorcitep{zhao2025learning,li2025confidence}, while entropy-based objectives reward concentrated token distributions~\colorcitep{agarwal2026unreasonable}.
Ensemble-based methods derive pseudo-supervision from multiple rollouts, beginning with answer-frequency consensus~\colorcitep{zuo2026ttrl} and extending to trajectory consistency~\colorcitep{zhang2026consistent}, weighted or subgroup voting~\colorcitep{wang2026beyond}, transformed-view agreement~\colorcitep{wang2026selfharmony}, complementary or teacher-guided consensus~\colorcitep{zhang2026corewarding}, and self-reflection~\colorcitep{wu2026srttrl}.
In contrast, GMAE supports different self-reward rules and addresses their shared dependence on sampled rollout groups.

\vspace{-0.05in}
\section{Conclusion}
\vspace{-0.05in}

In this paper, we introduced GMAE, a general advantage-estimation strategy for self-rewarding RL. By marginalizing rewards over possible group contexts, GMAE provides more reliable optimization signals while remaining compatible with different self-rewarding rules. Experiments across multiple models and benchmarks demonstrate consistent improvements and stable training, supporting GMAE as an effective approach to zero-label self-evolution.

\section*{AI Use Statement}
In this work, we used generative AI tools for writing polishing and the retrieval of some related work. We have verified the correctness of the AI-generated content and take responsibility for the final content of this work, including any text, claims, or artifacts produced with the aid of generative AI.

\bibliography{iclr2027_conference}

@inproceedings{wang2023selfinstruct,
  title={Self-instruct: Aligning language models with self-generated instructions},
  author={Wang, Yizhong and Kordi, Yeganeh and Mishra, Swaroop and Liu, Alisa and Smith, Noah A and Khashabi, Daniel and Hajishirzi, Hannaneh},
  booktitle={Proceedings of the 61st annual meeting of the association for computational linguistics (volume 1: long papers)},
  pages={13484--13508},
  year={2023}
}

@article{zelikman2022star,
  title={Star: Bootstrapping reasoning with reasoning},
  author={Zelikman, Eric and Wu, Yuhuai and Mu, Jesse and Goodman, Noah},
  journal={Advances in Neural Information Processing Systems},
  volume={35},
  pages={15476--15488},
  year={2022}
}

@article{chen2024spin,
  title={Self-play fine-tuning converts weak language models to strong language models},
  author={Chen, Zixiang and Deng, Yihe and Yuan, Huizhuo and Ji, Kaixuan and Gu, Quanquan},
  journal={arXiv preprint arXiv:2401.01335},
  year={2024}
}

@article{madaan2023selfrefine,
  title={Self-refine: Iterative refinement with self-feedback},
  author={Madaan, Aman and Tandon, Niket and Gupta, Prakhar and Hallinan, Skyler and Gao, Luyu and Wiegreffe, Sarah and Alon, Uri and Dziri, Nouha and Prabhumoye, Shrimai and Yang, Yiming and others},
  journal={Advances in neural information processing systems},
  volume={36},
  pages={46534--46594},
  year={2023}
}

@article{shinn2023reflexion,
  title={Reflexion: Language agents with verbal reinforcement learning},
  author={Shinn, Noah and Cassano, Federico and Gopinath, Ashwin and Narasimhan, Karthik and Yao, Shunyu},
  journal={Advances in neural information processing systems},
  volume={36},
  pages={8634--8652},
  year={2023}
}

@inproceedings{xu2024wizardlm,
  title={WizardLM: Empowering large pre-trained language models to follow complex instructions},
  author={Xu, Can and Sun, Qingfeng and Zheng, Kai and Geng, Xiubo and Zhao, Pu and Feng, Jiazhan and Tao, Chongyang and Lin, Qingwei and Jiang, Daxin},
  booktitle={International Conference on Learning Representations},
  volume={2024},
  pages={30745--30766},
  year={2024}
}

@inproceedings{lee2024llm2llm,
  title={Llm2llm: Boosting llms with novel iterative data enhancement},
  author={Lee, Nicholas and Wattanawong, Thanakul and Kim, Sehoon and Mangalam, Karttikeya and Shen, Sheng and Anumanchipalli, Gopala and Mahoney, Michael and Keutzer, Kurt and Gholami, Amir},
  booktitle={Findings of the Association for Computational Linguistics: ACL 2024},
  pages={6498--6526},
  year={2024}
}

@inproceedings{
    zelikman2024quietstar,
    title={Quiet-{ST}aR: Language Models Can Teach Themselves to Think Before Speaking},
    author={Eric Zelikman and Georges Raif Harik and Yijia Shao and Varuna Jayasiri and Nick Haber and Noah Goodman},
    booktitle={First Conference on Language Modeling},
    year={2024}
}

@article{zhou2024selfdiscover,
  title={Self-discover: Large language models self-compose reasoning structures},
  author={Zhou, Pei and Pujara, Jay and Ren, Xiang and Chen, Xinyun and Cheng, Heng-Tze and Le, Quoc V and Zhou, Denny and Mishra, Swaroop and Zheng, Huaixiu S and others},
  journal={Advances in Neural Information Processing Systems},
  volume={37},
  pages={126032--126058},
  year={2024}
}

@article{gulcehre2023rest,
  title={Reinforced self-training (rest) for language modeling},
  author={Gulcehre, Caglar and Paine, Tom Le and Srinivasan, Srivatsan and Konyushkova, Ksenia and Weerts, Lotte and Sharma, Abhishek and Siddhant, Aditya and Ahern, Alex and Wang, Miaosen and Gu, Chenjie and others},
  journal={arXiv preprint arXiv:2308.08998},
  year={2023}
}

@article{tao2024selfevolution,
  title={A survey on self-evolution of large language models},
  author={Tao, Zhengwei and Lin, Ting-En and Chen, Xiancai and Li, Hangyu and Wu, Yuchuan and Li, Yongbin and Jin, Zhi and Huang, Fei and Tao, Dacheng and Zhou, Jingren},
  journal={arXiv preprint arXiv:2404.14387},
  year={2024}
}

@article{gao2025selfevolving,
  title={A survey of self-evolving agents: What, when, how, and where to evolve on the path to artificial super intelligence},
  author={Gao, Huan-ang and Geng, Jiayi and Hua, Wenyue and Hu, Mengkang and Juan, Xinzhe and Liu, Hongzhang and Liu, Shilong and Qiu, Jiahao and Qi, Xuan and Wu, Yiran and others},
  journal={arXiv preprint arXiv:2507.21046},
  year={2025}
}

@article{zhao2025absolutezero,
  title={Absolute zero: Reinforced self-play reasoning with zero data},
  author={Zhao, Andrew and Wu, Yiran and Wu, Tong and Xu, Quentin and Yue, Yang and Lin, Matthieu and Wang, Shenzhi and Wu, Qingyun and Zheng, Zilong and Huang, Gao},
  journal={Advances in Neural Information Processing Systems},
  volume={38},
  pages={105816--105879},
  year={2026}
}

@inproceedings{huang2025rzero,
  title={R-zero: Self-evolving reasoning llm from zero data},
  author={Huang, Chengsong and Yu, Wenhao and Wang, Xiaoyang and Zhang, Hongming and Li, Zongxia and Li, Ruosen and Huang, Jiaxin and Mi, Haitao and Yu, Dong},
  booktitle={International Conference on Learning Representations},
  volume={2026},
  pages={130770--130790},
  year={2026}
}

@article{schulman2017proximal,
  title={Proximal policy optimization algorithms},
  author={Schulman, John and Wolski, Filip and Dhariwal, Prafulla and Radford, Alec and Klimov, Oleg},
  journal={arXiv preprint arXiv:1707.06347},
  year={2017}
}

@article{ouyang2022training,
  title={Training language models to follow instructions with human feedback},
  author={Ouyang, Long and Wu, Jeffrey and Jiang, Xu and Almeida, Diogo and Wainwright, Carroll and Mishkin, Pamela and Zhang, Chong and Agarwal, Sandhini and Slama, Katarina and Ray, Alex and others},
  journal={Advances in neural information processing systems},
  volume={35},
  pages={27730--27744},
  year={2022}
}

@inproceedings{loshchilov2017sgdr,
    title={{SGDR}: Stochastic Gradient Descent with Warm Restarts},
    author={Ilya Loshchilov and Frank Hutter},
    booktitle={International Conference on Learning Representations},
    year={2017},
}

@article{shao2024deepseekmath,
  title={Deepseekmath: Pushing the limits of mathematical reasoning in open language models},
  author={Shao, Zhihong and Wang, Peiyi and Zhu, Qihao and Xu, Runxin and Song, Junxiao and Bi, Xiao and Zhang, Haowei and Zhang, Mingchuan and Li, YK and Wu, Yang and others},
  journal={arXiv preprint arXiv:2402.03300},
  year={2024}
}

@article{hu2025reinforce++,
  title={Reinforce++: Stabilizing critic-free policy optimization with global advantage normalization},
  author={Hu, Jian and Liu, Jason Klein and Xu, Haotian and Shen, Wei},
  journal={arXiv preprint arXiv:2501.03262},
  year={2025}
}

@article{zheng2025group,
  title = {Group sequence policy optimization},
  author = {Zheng, Chujie and Liu, Shixuan and Li, Mingze and Chen, Xiong-Hui and Yu, Bowen and Gao, Chang and Dang, Kai and Liu, Yuqiong and Men, Rui and Yang, An and others},
  journal = {arXiv preprint arXiv:2507.18071},
  year = {2025}
}

@article{yu2026dapo,
  title={Dapo: An open-source llm reinforcement learning system at scale},
  author={Yu, Qiying and Zhang, Zheng and Zhu, Ruofei and Yuan, Yufeng and Zuo, Xiaochen and Yue, Yu and Dai, Weinan and Fan, Tiantian and Liu, Gaohong and Liu, Lingjun and others},
  journal={Advances in Neural Information Processing Systems},
  volume={38},
  pages={113222--113244},
  year={2026}
}

@inproceedings{wang2023selfconsistency,
    title={Self-Consistency Improves Chain of Thought Reasoning in Language Models},
    author={Xuezhi Wang and Jason Wei and Dale Schuurmans and Quoc V Le and Ed H. Chi and Sharan Narang and Aakanksha Chowdhery and Denny Zhou},
    booktitle={The Eleventh International Conference on Learning Representations },
    year={2023}
}

@article{zuo2026ttrl,
  title={Ttrl: Test-time reinforcement learning},
  author={Zuo, Yuxin and Zhang, Kaiyan and Sheng, Li and Qu, Shang and Cui, Ganqu and Zhu, Xuekai and Li, Haozhan and Long, Xinwei and Hua, Ermo and Qi, Biqing and others},
  journal={Advances in Neural Information Processing Systems},
  volume={38},
  pages={131459--131483},
  year={2026}
}

@inproceedings{wu2026srttrl,
    title={Beyond Majority Voting: Self-Reflective Test-Time Reinforcement Learning for {LLM} Reasoning},
    author={Sitong Wu and Haoru Tan and Xichen Zhang and Bin Xia and Shaofeng Zhang and Xiaojuan Qi and Bei Yu and Jiaya Jia},
    booktitle={Forty-third International Conference on Machine Learning},
    year={2026}
}

@inproceedings{wang2026beyond,
  title={Beyond majority voting: Towards fine-grained and more reliable reward signal for test-time reinforcement learning},
  author={Wang, Weiqin and Wang, Yile and Chen, Kehao and Huang, Hui},
  booktitle={Proceedings of the 64th Annual Meeting of the Association for Computational Linguistics (Volume 1: Long Papers)},
  pages={37251--37265},
  year={2026}
}

@inproceedings{wang2026selfharmony,
  title={Self-harmony: Learning to harmonize self-supervision and self-play in test-time reinforcement learning},
  author={Wang, Ru and Huang, Wei and Cao, Qi and Iwasawa, Yusuke and Matsuo, Yutaka and Guo, Jiaxian},
  booktitle={International Conference on Learning Representations},
  volume={2026},
  pages={83112--83144},
  year={2026}
}

@inproceedings{zhang2026corewarding,
  title={Co-rewarding: Stable self-supervised rl for eliciting reasoning in large language models},
  author={Zhang, Zizhuo and Zhu, Jianing and Ge, Xinmu and Zhao, Zihua and Li, Xuan and Feng, Xiao and Yao, Jiangchao and Han, Bo and others},
  booktitle={International Conference on Learning Representations},
  volume={2026},
  pages={82520--82558},
  year={2026}
}

@inproceedings{yu2026restrain,
    title={{RESTRAIN}: From Spurious Votes to Signals {\textemdash} Self-Training {RL} with Self-Penalization},
    author={ZHAONING YU and Zhaolun Su and Leitian Tao and Haozhu Wang and Aashu Singh and Hanchao Yu and Jianyu Wang and Hongyang Gao and Weizhe Yuan and Jason E Weston and Ping Yu and Jing Xu},
    booktitle={The Fourteenth International Conference on Learning Representations},
    year={2026}
}

@article{bai2022constitutional,
  title={Constitutional ai: Harmlessness from ai feedback},
  author={Bai, Yuntao and Kadavath, Saurav and Kundu, Sandipan and Askell, Amanda and Kernion, Jackson and Jones, Andy and Chen, Anna and Goldie, Anna and Mirhoseini, Azalia and McKinnon, Cameron and others},
  journal={arXiv preprint arXiv:2212.08073},
  year={2022}
}

@inproceedings{lee2024rlaif,
    title={{RLAIF} vs. {RLHF}: Scaling Reinforcement Learning from Human Feedback with {AI} Feedback},
    author={Harrison Lee and Samrat Phatale and Hassan Mansoor and Thomas Mesnard and Johan Ferret and Kellie Ren Lu and Colton Bishop and Ethan Hall and Victor Carbune and Abhinav Rastogi and Sushant Prakash},
    booktitle={Forty-first International Conference on Machine Learning},
    year={2024}
}

@article{wang2024selftaught,
  title={Self-taught evaluators},
  author={Wang, Tianlu and Kulikov, Ilia and Golovneva, Olga and Yu, Ping and Yuan, Weizhe and Dwivedi-Yu, Jane and Pang, Richard Yuanzhe and Fazel-Zarandi, Maryam and Weston, Jason and Li, Xian},
  journal={arXiv preprint arXiv:2408.02666},
  year={2024}
}

@inproceedings{wu2024metarewarding,
  title={Meta-rewarding language models: Self-improving alignment with llm-as-a-meta-judge},
  author={Wu, Tianhao and Yuan, Weizhe and Golovneva, Olga and Xu, Jing and Tian, Yuandong and Jiao, Jiantao and Weston, Jason E and Sukhbaatar, Sainbayar},
  booktitle={Proceedings of the 2025 Conference on Empirical Methods in Natural Language Processing},
  pages={11548--11565},
  year={2025}
}

@inproceedings{yuan2024selfrewarding,
  title = {Self-Rewarding Language Models},
  author = {Yuan, Weizhe and Pang, Richard Yuanzhe and Cho, Kyunghyun and Li, Xian and Sukhbaatar, Sainbayar and Xu, Jing and Weston, Jason E},
  booktitle = {Proceedings of the 41st International Conference on Machine Learning},
  pages = {57905--57923},
  year = {2024},
  editor = {Salakhutdinov, Ruslan and Kolter, Zico and Heller, Katherine and Weller, Adrian and Oliver, Nuria and Scarlett, Jonathan and Berkenkamp, Felix},
  volume = {235},
  series = {Proceedings of Machine Learning Research},
  month = {21--27 Jul},
  publisher = {PMLR}
}

@inproceedings{zhao2025learning,
  title={Learning to reason without external rewards},
  author={Zhao, Xuandong and Kang, Zhewei and Feng, Aosong and Levine, Sergey and Song, Dawn},
  booktitle={International Conference on Learning Representations},
  volume={2026},
  pages={2548--2581},
  year={2026}
}

@inproceedings{he2026how,
  title={How Far Can Unsupervised RLVR Scale LLM Training?},
  author={He, Bingxiang and Zuo, Yuxin and Liu, Zeyuan and Zhao, Shangziqi and Fu, Zixuan and Yang, Junlin and Qian, Cheng and Zhang, Kaiyan and Fan, Yuchen and Cui, Ganqu and others},
  booktitle={International Conference on Learning Representations},
  volume={2026},
  pages={14823--14865},
  year={2026}
}

@article{zhang2026consistent,
  title={Consistent paths lead to truth: Self-rewarding reinforcement learning for llm reasoning},
  author={Zhang, Kongcheng and Yao, Qi and Liu, Shunyu and Wang, Yingjie and Lai, Baisheng and Ye, Jieping and Song, Mingli and Tao, Dacheng},
  journal={Advances in Neural Information Processing Systems},
  volume={38},
  pages={59849--59887},
  year={2026}
}

@article{agarwal2026unreasonable,
  title={The unreasonable effectiveness of entropy minimization in llm reasoning},
  author={Agarwal, Shivam and Zhang, Zimin and Yuan, Lifan and Han, Jiawei and Peng, Hao},
  journal={Advances in Neural Information Processing Systems},
  volume={38},
  pages={107150--107180},
  year={2026}
}

@article{li2025confidence,
  title={Confidence is all you need: Few-shot rl fine-tuning of language models},
  author={Li, Pengyi and Skripkin, Matvey and Zubrey, Alexander and Kuznetsov, Andrey and Oseledets, Ivan},
  journal={arXiv preprint arXiv:2506.06395},
  year={2025}
}

@article{zhang2025no,
  title={No free lunch: Rethinking internal feedback for llm reasoning},
  author={Zhang, Yanzhi and Zhang, Zhaoxi and Guan, Haoxiang and Cheng, Yilin and Duan, Yitong and Wang, Chen and Wang, Yue and Zheng, Shuxin and He, Jiyan},
  journal={arXiv preprint arXiv:2506.17219},
  year={2025}
}

@article{roy2025you,
  title={You Need Reasoning to Learn Reasoning: The Limitations of Label-Free RL in Weak Base Models},
  author={Roy, Shuvendu and Hajimirsadeghi, Hossein and Zhai, Mengyao and Samei, Golnoosh},
  journal={arXiv preprint arXiv:2511.04902},
  year={2025}
}

@article{yang2025qwen3,
  title={Qwen3 technical report},
  author={Yang, An and Li, Anfeng and Yang, Baosong and Zhang, Beichen and Hui, Binyuan and Zheng, Bo and Yu, Bowen and Gao, Chang and Huang, Chengen and Lv, Chenxu and others},
  journal={arXiv preprint arXiv:2505.09388},
  year={2025}
}

@article{grattafiori2024llama,
  title={The llama 3 herd of models},
  author={Grattafiori, Aaron and Dubey, Abhimanyu and Jauhri, Abhinav and Pandey, Abhinav and Kadian, Abhishek and Al-Dahle, Ahmad and Letman, Aiesha and Mathur, Akhil and Schelten, Alan and Vaughan, Alex and others},
  journal={arXiv preprint arXiv:2407.21783},
  year={2024}
}

@inproceedings{hendrycks2021measuring,
    title={Measuring Mathematical Problem Solving With the {MATH} Dataset},
    author={Dan Hendrycks and Collin Burns and Saurav Kadavath and Akul Arora and Steven Basart and Eric Tang and Dawn Song and Jacob Steinhardt},
    booktitle={Thirty-fifth Conference on Neural Information Processing Systems Datasets and Benchmarks Track (Round 2)},
    year={2021},
}

@inproceedings{rein2023gpqa,
    title={{GPQA}: A Graduate-Level Google-Proof Q\&A Benchmark},
    author={David Rein and Betty Li Hou and Asa Cooper Stickland and Jackson Petty and Richard Yuanzhe Pang and Julien Dirani and Julian Michael and Samuel R. Bowman},
    booktitle={First Conference on Language Modeling},
    year={2024}
}

@article{wang2024mmlu,
  title={Mmlu-pro: A more robust and challenging multi-task language understanding benchmark},
  author={Wang, Yubo and Ma, Xueguang and Zhang, Ge and Ni, Yuansheng and Chandra, Abhranil and Guo, Shiguang and Ren, Weiming and Arulraj, Aaran and He, Xuan and Jiang, Ziyan and others},
  journal={Advances in Neural Information Processing Systems},
  volume={37},
  pages={95266--95290},
  year={2024}
}

@article{wang2026breaking,
  title={On the Overscaling Curse of Parallel Thinking: System Efficacy Contradicts Sample Efficiency},
  author={Wang, Yiming and Zhang, Zhuosheng and Wang, Rui},
  journal={arXiv preprint arXiv:2601.21619},
  year={2026}
}

@article{jensen1906,
  title={Sur les fonctions convexes et les in{\'e}galit{\'e}s entre les valeurs moyennes},
  author={Jensen, Johan Ludwig William Valdemar},
  journal={Acta mathematica},
  volume={30},
  number={1},
  pages={175--193},
  year={1906},
  publisher={Springer}
}

@inproceedings{jain2024livecodebench,
  title={Livecodebench: Holistic and contamination free evaluation of large language models for code},
  author={Jain, Naman and Gu, Alex and Li, Wen-Ding and Yan, Fanjia and Zhang, Tianjun and Wang, Sida and Solar-Lezama, Armando and Sen, Koushik and Stoica, Ion},
  booktitle={International Conference on Learning Representations},
  volume={2025},
  pages={58791--58831},
  year={2025}
}

@inproceedings{
    dang2025reinforcement,
    title={Reinforcement Learning for Reasoning in Small {LLM}s: What Works and What Doesn{\textquoteright}t},
    author={Quy-Anh Dang and Chris Ngo},
    booktitle={Logical and Symbolic Reasoning in Language Models @ AAAI 2026},
    year={2026}
}
\bibliographystyle{iclr2027_conference}

\appendix
\section{Additional Demonstrations of Self-Reward Uncertainty}
\label{app:more-motivation}

We provide four additional cases following the setup of
\textcolor{deepred}{Figure~\ref{fig:reward-stochasticity-violin}}.
For each \texttt{DAPO-14K} prompt, we sample one correct and one incorrect focal response from \texttt{Qwen3-8B-Base}.
Each focal response is evaluated under $M=100$ independent group contexts, each containing $G-1$ newly sampled responses with $G=16$.
We record its reward and group-normalized advantage under the four self-rewarding methods, together with its RLVR advantage under the same context.

\begin{figure}[H]
    \centering
    \includegraphics[width=\textwidth]{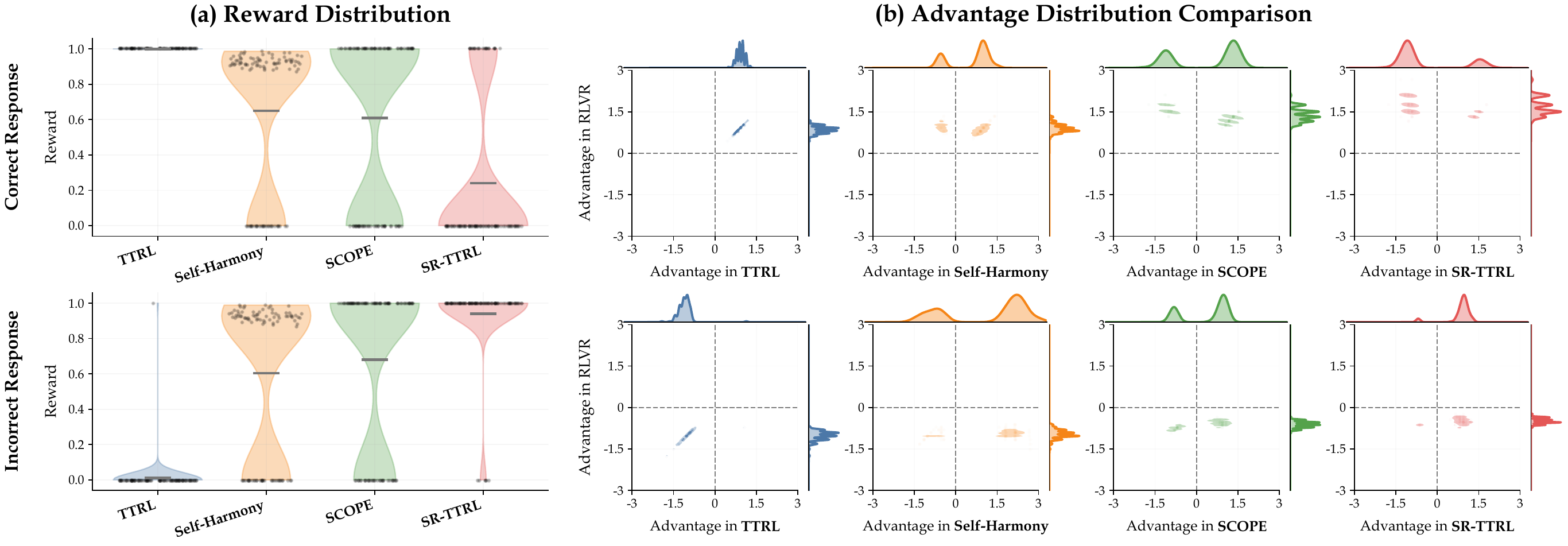}
    \vspace{-0.25in}
    \caption{\textbf{Self-Reward Uncertainty (Additional Case 2).}}
    \label{fig:more-motivation-case2}
\end{figure}

\begin{figure}[H]
    \centering
    \includegraphics[width=\textwidth]{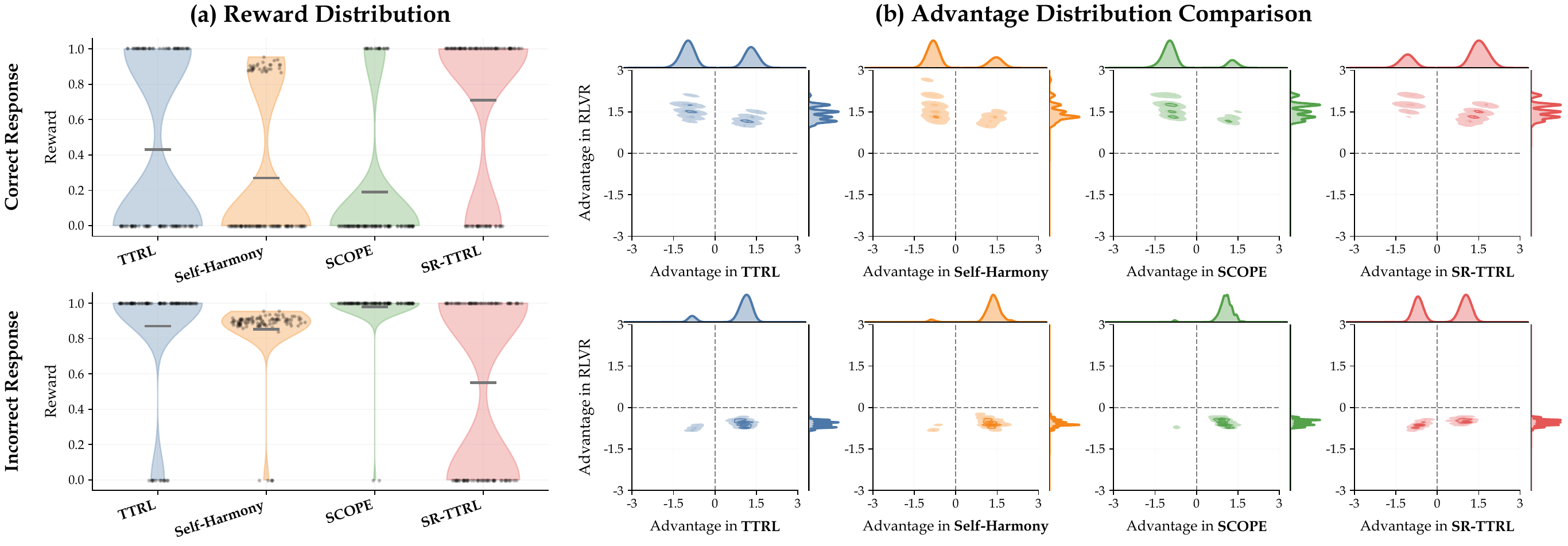}
    \vspace{-0.25in}
    \caption{\textbf{Self-Reward Uncertainty (Additional Case 3).}}
    \label{fig:more-motivation-case3}
\end{figure}

\begin{figure}[H]
    \centering
    \includegraphics[width=\textwidth]{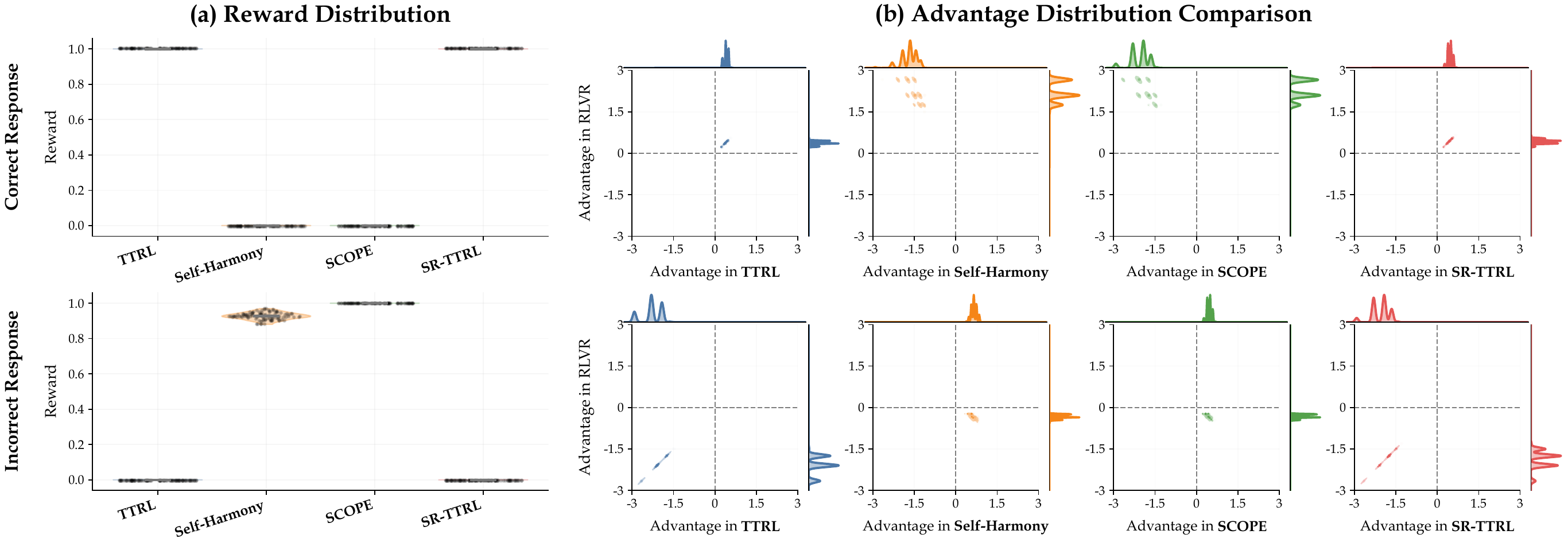}
    \vspace{-0.25in}
    \caption{\textbf{Self-Reward Uncertainty (Additional Case 4).}}
    \label{fig:more-motivation-case4}
\end{figure}

\begin{figure}[H]
    \centering
    \includegraphics[width=\textwidth]{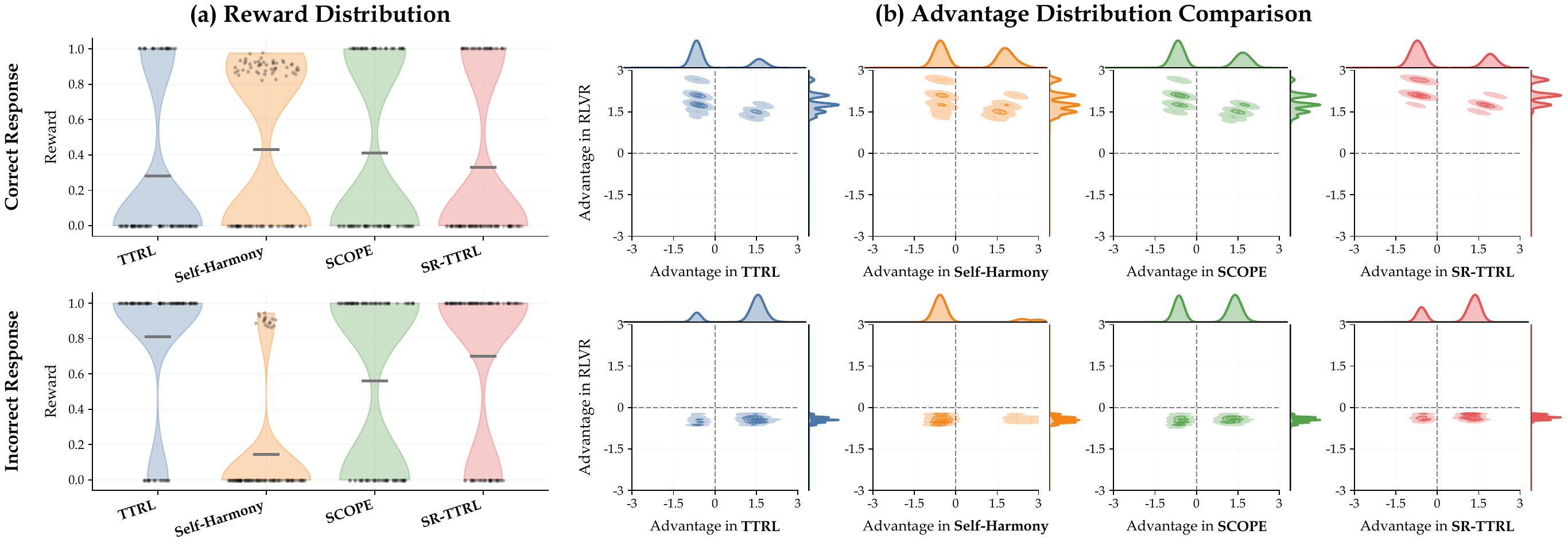}
    \vspace{-0.25in}
    \caption{\textbf{Self-Reward Uncertainty (Additional Case 5).}}
    \label{fig:more-motivation-case5}
\end{figure}
\section{Self-Rewarding Rules of Baselines}
\label{app:baseline-methods}

We describe the self-rewarding rules used by the baselines.
Following \textcolor{deepred}{Section~\ref{sec:preliminary}}, let
$\mathcal G^{\otimes G}=\{\mathbf y_1,\ldots,\mathbf y_G\}$
be the sampled response group, where $\mathbf y_i$ denotes a complete
response and $\mathcal A(\mathbf y_i)$ extracts its final answer.
Let
\begin{equation}
\mathcal U
=
\left\{
\mathcal A(\mathbf y_i)
\right\}_{i=1}^{G},
\qquad
n(a)
=
\sum_{i=1}^{G}
\mathbbm{1}\!\left[
\mathcal A(\mathbf y_i)=a
\right]
\end{equation}
denote the set of distinct answers and the frequency of answer $a$,
respectively.
In general, each method constructs a reward reference
$z=f(\mathcal G^{\otimes G})$ and assigns
$r_i=\mathcal R(\mathbf y_i\mid z)$.
Depending on the method, $z$ may be a single pseudo-label, a collection
of response-specific pseudo-labels, or structured group information.

\subsection{TTRL}

TTRL~\colorcitep{zuo2026ttrl} uses majority voting over the extracted
answers.
The most frequent answer is selected as the reward reference:
\begin{equation}
\boxed{
z_{\mathrm{TTRL}}
=
f_{\mathrm{TTRL}}(\mathcal G^{\otimes G})
=
\arg\max_{a\in\mathcal U}
\sum_{j=1}^{G}
\mathbbm{1}\!\left[
\mathcal A(\mathbf y_j)=a
\right]
}
\label{eq:app-ttrl-f}
\end{equation}
Each response receives a binary reward according to whether its final
answer matches this reference:
\begin{equation}
\boxed{
\mathcal R_{\mathrm{TTRL}}
\!\left(
\mathbf y_i\mid z_{\mathrm{TTRL}}
\right)
=
\mathbbm{1}\!\left[
\mathcal A(\mathbf y_i)=z_{\mathrm{TTRL}}
\right]
}
\label{eq:app-ttrl-r}
\end{equation}

\subsection{CoVo}

CoVo~\colorcitep{zhang2026consistent} derives rewards from the internal
reasoning trajectory of each response.
Specifically, response $\mathbf y_i$ is divided into $T_i$ reasoning
steps, and
$\mathbf s_{i,\ell}$ denotes its intermediate state before step $\ell$.
For a candidate answer $a$, CoVo measures its distance from this state by
\begin{equation}
d(\mathbf s_{i,\ell},a)
=
-\frac{1}{|a|}
\sum_{q=1}^{|a|}
\log
\pi_{\theta_{\mathrm{old}}}
\!\left(
a_q\mid\mathbf s_{i,\ell},a_{<q}
\right).
\label{eq:app-covo-distance}
\end{equation}
Let $a_i=\mathcal A(\mathbf y_i)$.
The consistency and volatility of $\mathbf y_i$ are defined as
\begin{equation}
\operatorname{Con}(\mathbf y_i)
=
\frac{1}{T_i}
\sum_{\ell=0}^{T_i-1}
\mathbbm{1}\!\left[
a_i
=
\arg\min_{a\in\mathcal U}
d(\mathbf s_{i,\ell},a)
\right],
\label{eq:app-covo-consistency}
\end{equation}
\begin{equation}
\operatorname{Vol}(\mathbf y_i)
=
\frac{1}{T_i}
\max\!\left(
\left\{
\ell:
a_i
\neq
\arg\min_{a\in\mathcal U}
d(\mathbf s_{i,\ell},a)
\right\}
\cup\{0\}
\right).
\label{eq:app-covo-volatility}
\end{equation}

Responses sharing the same extracted answer form
\begin{equation}
\mathcal I(a)
=
\left\{
j:
\mathcal A(\mathbf y_j)=a
\right\}.
\end{equation}
CoVo maps each response to
\begin{equation}
\mathbf v_i
=
\operatorname{Con}(\mathbf y_i)
\begin{bmatrix}
\cos\!\left(\operatorname{Vol}(\mathbf y_i)\right)\\
\sin\!\left(\operatorname{Vol}(\mathbf y_i)\right)
\end{bmatrix}
\end{equation}
and aggregates these vectors within each answer group.
Accordingly, its group-derived reference is
\begin{equation}
\boxed{
z_{\mathrm{CoVo}}
=
f_{\mathrm{CoVo}}(\mathcal G^{\otimes G})
=
\left\{
\mathcal I(a),
\left\{
\mathbf v_j
\right\}_{j\in\mathcal I(a)}
\right\}_{a\in\mathcal U}
}
\label{eq:app-covo-f}
\end{equation}
The intrinsic reward of $\mathbf y_i$ is the normalized magnitude of the
aggregated vector for its answer group:
\begin{equation}
r_i^{\mathrm{int}}
=
\frac{1}{|\mathcal I(a_i)|}
\left\|
\sum_{j\in\mathcal I(a_i)}
\mathbf v_j
\right\|_2.
\label{eq:app-covo-intrinsic}
\end{equation}
CoVo additionally assigns a curiosity reward
$r_i^{\mathrm{cur}}$ based on the likelihood of transitions between
successive reasoning states, together with a KL penalty that prevents a
few extremely unlikely tokens from dominating the reward.
The final reward is
\begin{equation}
\boxed{
\mathcal R_{\mathrm{CoVo}}
\!\left(
\mathbf y_i\mid z_{\mathrm{CoVo}}
\right)
=
r_i^{\mathrm{int}}
+
r_i^{\mathrm{cur}}
}
\label{eq:app-covo-r}
\end{equation}
The token-level computation of $r_i^{\mathrm{cur}}$ follows the original
implementation of CoVo.

\subsection{SCOPE}

SCOPE~\colorcitep{wang2026beyond} combines step-wise confidence weighting
with subgroup-specific pseudo-labels.
Suppose response $\mathbf y_i$ contains $L_i$ reasoning steps, with step
$\ell$ containing $N_{i,\ell}$ tokens.
Let
$P_{i,\ell,t}^{(1)},\ldots,P_{i,\ell,t}^{(K_c)}$
denote the top-$K_c$ next-token probabilities at token position $t$.
The response-level confidence is
\begin{equation}
c_i
=
\frac{1}{L_i}
\sum_{\ell=1}^{L_i}
\frac{1}{N_{i,\ell}}
\sum_{t=1}^{N_{i,\ell}}
\left(
-\frac{1}{K_c}
\sum_{q=1}^{K_c}
\log P_{i,\ell,t}^{(q)}
\right).
\label{eq:app-scope-confidence}
\end{equation}

SCOPE partitions the $G$ responses into equal-sized subgroups.
The subgroup size is dynamically selected through the Pareto criterion
in the original method, which balances the agreement between responses
and their subgroup labels against the diversity of labels across
subgroups.
Let $\mathcal I_1,\ldots,\mathcal I_L$ denote the resulting partition and
let $\ell(i)$ satisfy $i\in\mathcal I_{\ell(i)}$.

For each subgroup $\ell$, SCOPE independently draws a bootstrap sample
$\mathcal B_\ell$ from the complete rollout group and applies
confidence-weighted voting:
\begin{equation}
z_\ell
=
\arg\max_{a\in\mathcal U}
\sum_{j\in\mathcal B_\ell}
c_j
\mathbbm{1}\!\left[
\mathcal A(\mathbf y_j)=a
\right].
\label{eq:app-scope-subgroup-label}
\end{equation}
Therefore, SCOPE constructs one reward reference for each response
according to its subgroup:
\begin{equation}
\boxed{
z_{\mathrm{SCOPE}}
=
f_{\mathrm{SCOPE}}(\mathcal G^{\otimes G})
=
\left(
z_{\ell(1)},\ldots,z_{\ell(G)}
\right)
}
\label{eq:app-scope-f}
\end{equation}
Writing $z_{\mathrm{SCOPE},i}=z_{\ell(i)}$, its reward is
\begin{equation}
\boxed{
\mathcal R_{\mathrm{SCOPE}}
\!\left(
\mathbf y_i\mid z_{\mathrm{SCOPE}}
\right)
=
\mathbbm{1}\!\left[
\mathcal A(\mathbf y_i)=z_{\mathrm{SCOPE},i}
\right]
}
\label{eq:app-scope-r}
\end{equation}
The candidate subgroup sizes, bootstrap configuration, and Pareto
selection procedure follow the original SCOPE implementation.

\subsection{Self-Harmony}

Self-Harmony~\colorcitep{wang2026selfharmony} constructs two views of the
same prompt.
Besides the original prompt $\mathbf x$, the model acts as a Reframer to
generate a semantically equivalent prompt $\mathbf x'$.
It then samples an additional response group
\begin{equation}
\mathcal G'^{\otimes G'}
=
\left\{
\mathbf y'_1,\ldots,\mathbf y'_{G'}
\right\}
\sim
\pi_{\theta_{\mathrm{old}}}^{\otimes G'}
(\cdot\mid\mathbf x').
\end{equation}
For each candidate answer $a$, its empirical frequencies in the original
and reframed views are
\begin{equation}
p(a)
=
\frac{1}{G}
\sum_{i=1}^{G}
\mathbbm{1}\!\left[
\mathcal A(\mathbf y_i)=a
\right],
\qquad
p'(a)
=
\frac{1}{G'}
\sum_{j=1}^{G'}
\mathbbm{1}\!\left[
\mathcal A(\mathbf y'_j)=a
\right].
\end{equation}
Self-Harmony selects the answer with the largest harmonic mean of these
two frequencies:
\begin{equation}
\boxed{
z_{\mathrm{SH}}
=
f_{\mathrm{SH}}
\!\left(
\mathcal G^{\otimes G},
\mathcal G'^{\otimes G'}
\right)
=
\arg\max_a
\frac{2p(a)p'(a)}
{p(a)+p'(a)}
}
\label{eq:app-self-harmony-f}
\end{equation}
The score is defined as zero when $p(a)+p'(a)=0$.
The original responses are rewarded by agreement with this shared
reference:
\begin{equation}
\boxed{
\mathcal R_{\mathrm{SH}}
\!\left(
\mathbf y_i\mid z_{\mathrm{SH}}
\right)
=
\mathbbm{1}\!\left[
\mathcal A(\mathbf y_i)=z_{\mathrm{SH}}
\right]
}
\label{eq:app-self-harmony-r}
\end{equation}
Self-Harmony also trains the reframing branch using format and diversity
rewards.
Their prompt templates and implementation details follow the original
paper.

\subsection{RESTRAIN}

RESTRAIN~\colorcitep{yu2026restrain} retains all distinct answers instead
of selecting only the majority answer.
Write
$\mathcal U=\{a_1,\ldots,a_M\}$ and define
\begin{equation}
c_m
=
\sum_{i=1}^{G}
\mathbbm{1}\!\left[
\mathcal A(\mathbf y_i)=a_m
\right],
\qquad
p_m=\frac{c_m}{G}.
\end{equation}
Given the Gaussian shaping function
\begin{equation}
g(p)
=
\exp\!\left(
-\frac{(p-k)^2}{2\sigma^2}
\right),
\qquad
k\in[0,1],
\quad
\sigma>0,
\end{equation}
RESTRAIN assigns each candidate answer the normalized weight
\begin{equation}
w_m
=
\frac{g(p_m)}
{\sum_{\ell=1}^{M}g(p_\ell)}.
\end{equation}
Its reward reference is therefore a weighted collection of candidate
answers:
\begin{equation}
\boxed{
z_{\mathrm{RES}}
=
f_{\mathrm{RES}}(\mathcal G^{\otimes G})
=
\left\{
(a_m,w_m)
\right\}_{m=1}^{M}
}
\label{eq:app-restrain-f}
\end{equation}

Unlike methods using a single pseudo-label, RESTRAIN evaluates each
response against every candidate answer:
\begin{equation}
\boxed{
\mathcal R_{\mathrm{RES}}
\!\left(
\mathbf y_i\mid z_{\mathrm{RES}}
\right)
=
\left\{
\mathbbm{1}\!\left[
\mathcal A(\mathbf y_i)=a_m
\right]
\right\}_{m=1}^{M}
}
\label{eq:app-restrain-r}
\end{equation}
Let
$r_{i,m}=\mathbbm{1}[\mathcal A(\mathbf y_i)=a_m]$
and let $A_{i,m}$ be the group-normalized advantage computed from
$\{r_{j,m}\}_{j=1}^{G}$.
When the majority count
$c_{\max}=\max_m c_m$
falls below a threshold $\kappa$, RESTRAIN sets the rewards to zero and
subtracts a fixed offset $\delta$ from every advantage:
\begin{equation}
\widetilde r_{i,m}
=
\begin{cases}
r_{i,m}, & c_{\max}\geq\kappa,\\
0, & c_{\max}<\kappa,
\end{cases}
\qquad
\widetilde A_{i,m}
=
\begin{cases}
A_{i,m}, & c_{\max}\geq\kappa,\\
A_{i,m}-\delta, & c_{\max}<\kappa.
\end{cases}
\label{eq:app-restrain-penalty}
\end{equation}
The resulting GRPO objectives are combined using the pseudo-label weights:
\begin{equation}
\mathcal J_{\mathrm{RES}}(\theta)
=
u_{\mathbf x}
\sum_{m=1}^{M}
w_m
\mathcal J_{\mathrm{GRPO}}
\!\left(
\theta;\{\widetilde A_{i,m}\}_{i=1}^{G}
\right),
\label{eq:app-restrain-objective}
\end{equation}
where $u_{\mathbf x}$ is a fixed prompt weight obtained from the
self-consistency of a frozen reference model, as defined in the original
RESTRAIN implementation.

\subsection{Co-Rewarding}

Co-Rewarding~\colorcitep{zhang2026corewarding} constructs its reward
reference using a slowly evolving teacher policy.
At training step $t$, the teacher parameters are updated by
\begin{equation}
\widetilde{\theta}_t
=
\alpha_t\widetilde{\theta}_{t-1}
+
(1-\alpha_t)\theta_{\mathrm{old},t},
\end{equation}
where $\alpha_t\in[0,1]$ is the EMA coefficient.
The teacher samples
\begin{equation}
\widetilde{\mathcal G}^{\otimes\widetilde G}
=
\left\{
\widetilde{\mathbf y}_1,\ldots,
\widetilde{\mathbf y}_{\widetilde G}
\right\}
\sim
\pi_{\widetilde{\theta}_t}^{\otimes\widetilde G}
(\cdot\mid\mathbf x).
\end{equation}
The majority answer in this teacher-generated group becomes the reward
reference:
\begin{equation}
\boxed{
z_{\mathrm{CR}}
=
f_{\mathrm{CR}}
\!\left(
\widetilde{\mathcal G}^{\otimes\widetilde G}
\right)
=
\arg\max_a
\sum_{j=1}^{\widetilde G}
\mathbbm{1}\!\left[
\mathcal A(\widetilde{\mathbf y}_j)=a
\right]
}
\label{eq:app-corewarding-f}
\end{equation}
The student responses are evaluated against this reference:
\begin{equation}
\boxed{
\mathcal R_{\mathrm{CR}}
\!\left(
\mathbf y_i\mid z_{\mathrm{CR}}
\right)
=
\mathbbm{1}\!\left[
\mathcal A(\mathbf y_i)=z_{\mathrm{CR}}
\right]
}
\label{eq:app-corewarding-r}
\end{equation}

\subsection{SR-TTRL}
\label{app:baseline-srttrl}

SR-TTRL~\colorcitep{wu2026srttrl} constructs its pseudo-label through
three stages: exploration, summarization, and reflection.
Given the sampled group
$\mathcal G^{\otimes G}=\{\mathbf y_1,\ldots,\mathbf y_G\}$,
the exploration stage first collects the distinct extracted answers
\begin{equation}
\mathcal U
=
\left\{
\mathcal A(\mathbf y_j)
\right\}_{j=1}^{G}.
\end{equation}
For each $a\in\mathcal U$,
$\operatorname{Rep}$ selects a representative response from those
producing answer $a$.
The summarization operator
$\operatorname{Summ}_{\pi_{\theta_{\mathrm{old}}}}$
then uses the current prompt and this representative response to produce
a concise reasoning summary $\mathbf s_a$.
Finally,
$\operatorname{Reflect}_{\pi_{\theta_{\mathrm{old}}}}$
jointly compares all answer--summary pairs and returns the selected answer
as the pseudo-label:

\begin{equation}
\begin{aligned}
&\mathbf s_a
=
\operatorname{Summ}_{\pi_{\theta_{\mathrm{old}}}}
\left(
\mathbf x,
\operatorname{Rep}
\left\{
\mathbf y_j
\mid
\mathcal A(\mathbf y_j)=a
\right\}
\right),
\qquad
a\in\mathcal U,
\qquad
\mathcal U
=
\{\mathcal A(\mathbf y_j)\}_{j=1}^{G},
\\
&\boxed{
z_{\mathrm{SR}}
=
f_{\mathrm{SR}}(\mathcal G^{\otimes G})
=
\operatorname{Reflect}_{\pi_{\theta_{\mathrm{old}}}}
\left(
\mathbf x,
\{(a,\mathbf s_a)\}_{a\in\mathcal U}
\right)
}.
\end{aligned}
\label{eq:app-srttrl-f}
\end{equation}

Here, $\operatorname{Rep}$ denotes the representative-response selection
procedure, $\operatorname{Summ}$ summarizes the reasoning supporting each
candidate answer, and $\operatorname{Reflect}$ compares these summarized
reasoning paths to select the final reward reference.
The exact summarization and reflection prompts follow the original
SR-TTRL implementation.

Each sampled response is then evaluated against this reference using a
binary reward:
\begin{equation}
\boxed{
\mathcal R_{\mathrm{SR}}
\left(
\mathbf y_i
\mid
z_{\mathrm{SR}}
\right)
=
\mathbbm{1}
\left[
\mathcal A(\mathbf y_i)=z_{\mathrm{SR}}
\right]
}.
\label{eq:app-srttrl-r}
\end{equation}
\section{Theory of GMAE}

\subsection{Approximation Error of Shared-Pool Estimation}
\label{app:gmae-approximation-bound}

Here, we formalize the approximation error between the ideal
group-marginalized reward distribution $P_{r_i}$ in
\textcolor{deepred}{Eq.\ref{eq:group-marginalized-reward}}
and its shared-pool estimate $\widehat P_{r_i}$ in
\textcolor{deepred}{Eq.\ref{eq:gmae-empirical-reward}}.
Let $d_{\mathrm{TV}}$ denote total variation distance and assume that
$\mathcal V_i=\operatorname{supp}(P_{r_i})$ is finite.

\begin{theorem}[Approximation Error of Shared-Pool Estimation]
\label{thm:gmae-approximation}
For any $\delta\in(0,1)$, with probability at least $1-\delta$ over the
candidate-pool and context sampling,
\begin{equation}
d_{\mathrm{TV}}\!\left(
\widehat P_{r_i},P_{r_i}
\right)
\le
\sqrt{
|\mathcal V_i|\log 2+\log(4/\delta)
}
\left(
\frac{1}{\sqrt{2K}}
+
\sqrt{
\frac{G-1}{G+G'-1}
}
\right).
\label{eq:gmae-approximation-bound}
\end{equation}
\end{theorem}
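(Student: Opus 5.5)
We split the error through an intermediate object with the triangle inequality. Conditional on the candidate pool $\mathcal C$, let $\bar P_{r_i}$ denote the reward distribution obtained by drawing a context \emph{uniformly} from all $\binom{G+G'-1}{G-1}$ subsets of $\mathcal C\setminus\{\mathbf y_i\}$. Then
\begin{equation*}
d_{\mathrm{TV}}(\widehat P_{r_i},P_{r_i})\le d_{\mathrm{TV}}(\widehat P_{r_i},\bar P_{r_i})+d_{\mathrm{TV}}(\bar P_{r_i},P_{r_i}).
\end{equation*}
We bound each term with probability at least $1-\delta/2$ and combine them with a union bound. For both terms we use the variational form $d_{\mathrm{TV}}(P,Q)=\max_{S\subseteq\mathcal V_i}|P(S)-Q(S)|$. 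The support is finite, so this maximum runs over $2^{|\mathcal V_i|}$ events, which produces the $|\mathcal V_i|\log 2$ term. We will need to either assume or argue that $\widehat P_{r_i}$ and $\bar P_{r_i}$ are supported in $\mathcal V_i$; they are, because every realized reward is a value the ideal distribution assigns positive mass to, up to null sets.

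\textbf{First term (context sampling).} Fix $\mathcal C$ and an event $S$. The quantity $\widehat P_{r_i}(S)$ is an average of $K$ indicators. Each indicator has mean $\bar P_{r_i}(S)$, and the indicators come from contexts drawn uniformly without replacement. Hoeffding's inequality still holds for sampling without replacement (Hoeffding 1963), and it gives $\Pr(|\widehat P(S)-\bar P(S)|>t)\le 2e^{-2Kt^2}$. We take a union bound over the $2^{|\mathcal V_i|}$ events and set the total failure probability to $\delta/2$. This gives $t=\sqrt{(|\mathcal V_i|\log2+\log(4/\delta))/(2K)}$, which is exactly the first summand. When $K$ equals the full binomial count, the first term is zero, so the bound holds trivially in that case.

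\textbf{Second term (pool sampling).} For a fixed event $S$, $\bar P_{r_i}(S)$ is a U-statistic of order $m=G-1$ in the $n=G+G'-1$ i.i.d.\ pool samples $\mathcal C\setminus\{\mathbf y_i\}$. Its kernel is $h(\mathbf z_1,\dots,\mathbf z_{G-1})=\mathbbm 1[\mathcal R(\mathbf y_i\mid f(\{\mathbf y_i,\mathbf z_1,\dots\}))\in S]$, which is bounded in $[0,1]$, and its mean is $P_{r_i}(S)$. Hoeffding's U-statistic inequality averages over $\lfloor n/m\rfloor$ disjoint blocks and gives $\Pr(|\bar P(S)-P(S)|>t)\le 2\exp(-2\lfloor n/m\rfloor t^2)$. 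We union bound over the $2^{|\mathcal V_i|}$ events at level $\delta/2$. This yields a deviation of order $\sqrt{(|\mathcal V_i|\log 2+\log(4/\delta))\,m/(2n)}$ up to the floor. The stated bound has coefficient $1$ rather than $1/\sqrt2$ on the factor $\sqrt{(G-1)/(G+G'-1)}$. That slack should absorb the floor, since $\lfloor n/m\rfloor\ge n/(2m)$ whenever $n\ge m$, and it also lets us keep the same logarithmic constant.

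\textbf{Main obstacle.} The delicate step is the second term. It requires correctly invoking Hoeffding's inequality for U-statistics, including the floor handling and the fact that the randomness of $\mathbf y_i$ itself is conditioned on. It also requires checking that $f$ and $\mathcal R$ are measurable and symmetric enough for the U-statistic representation to be valid; we assume $f$ is permutation invariant in its inputs. I would verify the block-averaging constant first, because the precise form of the $\sqrt{(G-1)/(G+G'-1)}$ factor depends on it.
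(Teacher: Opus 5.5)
Your proposal is correct and follows the paper's proof essentially step for step. It uses the same triangle inequality through the complete-pool distribution $\bar P_{r_i}$, and bounds the context term by Hoeffding for sampling without replacement plus a union bound over the $2^{|\mathcal V_i|}$ subsets. The pool term is bounded by the same order-$(G-1)$ U-statistic inequality with $q=\lfloor (G+G'-1)/(G-1)\rfloor$ blocks. The two failure probabilities are set to $\delta/2$ each, and the floor is absorbed by the same step $q\ge (G+G'-1)/(2(G-1))$.

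The only differences are small. The paper rederives the U-statistic tail by hand, by averaging block means over permutations and then applying Jensen on the exponential, Hoeffding's lemma, and Chernoff, whereas you cite Hoeffding's U-statistic inequality directly. Your remark that $\widehat P_{r_i}$ and $\bar P_{r_i}$ are almost surely supported in $\mathcal V_i$ makes explicit a point the paper leaves implicit.
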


\begin{proof}
Condition on the fixed response $\mathbf y_i$ and define
\begin{equation}
N=G+G'-1,
\qquad
m=G-1,
\qquad
q=\left\lfloor\frac{N}{m}\right\rfloor.
\label{eq:approx-proof-notation}
\end{equation}
Let $\mathbf z_1,\ldots,\mathbf z_N$ denote the remaining candidate
responses and
\begin{equation}
\mathfrak S_{N,m}
=
\left\{
S\subseteq\{1,\ldots,N\}:|S|=m
\right\}
\end{equation}
denote the set of all valid companion-index subsets.
For each $S\in\mathfrak S_{N,m}$, define the resulting reward as
\begin{equation}
\rho_i(S)
=
\mathcal R\!\left(
\mathbf y_i
\mid
f\!\left(
\{\mathbf y_i\}\cup
\{\mathbf z_j:j\in S\}
\right)
\right).
\end{equation}

Averaging over all contexts in the candidate pool gives the complete-pool
distribution
\begin{equation}
\overline P_{r_i}^{\mathcal C}
=
\binom{N}{m}^{-1}
\sum_{S\in\mathfrak S_{N,m}}
\delta_{\rho_i(S)}.
\label{eq:complete-pool-distribution}
\end{equation}
If $S_1,\ldots,S_K$ are sampled uniformly without replacement from
$\mathfrak S_{N,m}$, the empirical distribution used by GMAE is
\begin{equation}
\widehat P_{r_i}
=
\frac{1}{K}
\sum_{k=1}^{K}
\delta_{\rho_i(S_k)}.
\label{eq:sampled-pool-distribution}
\end{equation}

By the triangle inequality,
\begin{equation}
\begin{aligned}
d_{\mathrm{TV}}\!\left(
\widehat P_{r_i},P_{r_i}
\right)
\le\;&
d_{\mathrm{TV}}\!\left(
\widehat P_{r_i},
\overline P_{r_i}^{\mathcal C}
\right)
\\
&+
d_{\mathrm{TV}}\!\left(
\overline P_{r_i}^{\mathcal C},
P_{r_i}
\right).
\end{aligned}
\label{eq:approx-error-decomposition}
\end{equation}
The first term is the context-subsampling error caused by evaluating only
$K$ contexts, while the second is the finite-pool error caused by using
$N$ sampled candidate responses.

For distributions supported on the finite space $\mathcal V_i$,
\begin{equation}
d_{\mathrm{TV}}(P,Q)
=
\sup_{\mathcal A\subseteq\mathcal V_i}
\left|
P(\mathcal A)-Q(\mathcal A)
\right|.
\label{eq:tv-finite-support}
\end{equation}

We first bound the context-subsampling error.
For any $\mathcal A\subseteq\mathcal V_i$, define
\begin{equation}
X_k^{\mathcal A}
=
\mathbbm{1}\!\left[
\rho_i(S_k)\in\mathcal A
\right].
\end{equation}
Conditioned on the candidate pool,
$\widehat P_{r_i}(\mathcal A)$ is the mean of $K$ values sampled without
replacement from the finite population
$\{\mathbbm{1}[\rho_i(S)\in\mathcal A]:
S\in\mathfrak S_{N,m}\}$, whose population mean is
$\overline P_{r_i}^{\mathcal C}(\mathcal A)$.
Hoeffding's inequality for sampling without replacement therefore gives
\begin{equation}
\Pr\!\left[
\left|
\widehat P_{r_i}(\mathcal A)
-
\overline P_{r_i}^{\mathcal C}(\mathcal A)
\right|
\ge \epsilon_1
\,\middle|\,
\mathcal C
\right]
\le
2\exp\!\left(-2K\epsilon_1^2\right).
\end{equation}
Applying a union bound over the at most $2^{|\mathcal V_i|}$ subsets of
$\mathcal V_i$ yields
\begin{equation}
\Pr\!\left[
d_{\mathrm{TV}}\!\left(
\widehat P_{r_i},
\overline P_{r_i}^{\mathcal C}
\right)
\ge \epsilon_1
\right]
\le
2^{|\mathcal V_i|+1}
\exp\!\left(-2K\epsilon_1^2\right).
\label{eq:context-subsampling-bound}
\end{equation}
The bound is unconditional because its right-hand side does not depend
on the realization of $\mathcal C$.

We next bound the finite-pool error.
For each $\mathcal A\subseteq\mathcal V_i$, define the symmetric kernel
\begin{equation}
h_{\mathcal A}
(\mathbf z_1,\ldots,\mathbf z_m)
=
\mathbbm{1}\!\left[
\mathcal R\!\left(
\mathbf y_i
\mid
f\!\left(
\{\mathbf y_i,\mathbf z_1,\ldots,\mathbf z_m\}
\right)
\right)
\in\mathcal A
\right].
\end{equation}
Its population mean is
\begin{equation}
p_{\mathcal A}
=
\mathbb E\!\left[
h_{\mathcal A}
(\mathbf z_1,\ldots,\mathbf z_m)
\right]
=
P_{r_i}(\mathcal A).
\label{eq:kernel-population-mean}
\end{equation}
Moreover,
\begin{equation}
\overline P_{r_i}^{\mathcal C}(\mathcal A)
=
\binom{N}{m}^{-1}
\sum_{S\in\mathfrak S_{N,m}}
h_{\mathcal A}(\mathbf z_j:j\in S),
\label{eq:complete-pool-u-statistic}
\end{equation}
which is a U-statistic of order $m$.

To derive its concentration, let $\Pi_N$ be the set of all permutations
of $\{1,\ldots,N\}$.
For each $\pi\in\Pi_N$, divide its first $qm$ indices into $q$ disjoint
blocks of size $m$ and define
\begin{equation}
V_{\pi,\mathcal A}
=
\frac{1}{q}
\sum_{\ell=1}^{q}
h_{\mathcal A}\!\left(
\mathbf z_{\pi((\ell-1)m+1)},
\ldots,
\mathbf z_{\pi(\ell m)}
\right).
\end{equation}
Every $m$-subset appears equally often among these blocks over all
permutations, so
\begin{equation}
\overline P_{r_i}^{\mathcal C}(\mathcal A)
=
\frac{1}{N!}
\sum_{\pi\in\Pi_N}
V_{\pi,\mathcal A}.
\label{eq:u-statistic-permutation-average}
\end{equation}

For any $\lambda>0$, Jensen's inequality gives
\begin{equation}
\begin{aligned}
&\mathbb E\exp\!\left[
\lambda\left(
\overline P_{r_i}^{\mathcal C}(\mathcal A)
-p_{\mathcal A}
\right)
\right]
\\
&\quad\le
\frac{1}{N!}
\sum_{\pi\in\Pi_N}
\mathbb E\exp\!\left[
\lambda\left(
V_{\pi,\mathcal A}-p_{\mathcal A}
\right)
\right].
\end{aligned}
\label{eq:u-statistic-jensen}
\end{equation}
For a fixed permutation, the $q$ blocks are disjoint and hence their
kernel evaluations are independent variables in $[0,1]$, each with mean
$p_{\mathcal A}$.
Hoeffding's lemma therefore implies
\begin{equation}
\mathbb E\exp\!\left[
\lambda\left(
V_{\pi,\mathcal A}-p_{\mathcal A}
\right)
\right]
\le
\exp\!\left(
\frac{\lambda^2}{8q}
\right).
\end{equation}
Combining this with \textcolor{deepred}{Eq.\ref{eq:u-statistic-jensen}} and applying the
Chernoff bound gives
\begin{equation}
\Pr\!\left[
\overline P_{r_i}^{\mathcal C}(\mathcal A)
-p_{\mathcal A}
\ge\epsilon_2
\right]
\le
\inf_{\lambda>0}
\exp\!\left(
-\lambda\epsilon_2+\frac{\lambda^2}{8q}
\right)
=
\exp\!\left(-2q\epsilon_2^2\right),
\end{equation}
where the minimum is attained at $\lambda=4q\epsilon_2$.
Applying the same argument to the lower tail yields
\begin{equation}
\Pr\!\left[
\left|
\overline P_{r_i}^{\mathcal C}(\mathcal A)
-
P_{r_i}(\mathcal A)
\right|
\ge\epsilon_2
\right]
\le
2\exp\!\left(-2q\epsilon_2^2\right).
\end{equation}
A union bound over all subsets of $\mathcal V_i$ then gives
\begin{equation}
\Pr\!\left[
d_{\mathrm{TV}}\!\left(
\overline P_{r_i}^{\mathcal C},
P_{r_i}
\right)
\ge\epsilon_2
\right]
\le
2^{|\mathcal V_i|+1}
\exp\!\left(-2q\epsilon_2^2\right).
\label{eq:finite-pool-bound}
\end{equation}

Let
\begin{equation}
L_\delta
=
|\mathcal V_i|\log 2+\log(4/\delta)
\end{equation}
and choose
\begin{equation}
\epsilon_1
=
\sqrt{\frac{L_\delta}{2K}},
\qquad
\epsilon_2
=
\sqrt{\frac{L_\delta}{2q}}.
\end{equation}
By \textcolor{deepred}{Eq.\ref{eq:context-subsampling-bound}} and
\textcolor{deepred}{Eq.\ref{eq:finite-pool-bound}}, each error exceeds its corresponding bound
with probability at most $\delta/2$.
Thus, with probability at least $1-\delta$,
\begin{equation}
d_{\mathrm{TV}}\!\left(
\widehat P_{r_i},P_{r_i}
\right)
\le
\sqrt{L_\delta}
\left(
\frac{1}{\sqrt{2K}}
+
\frac{1}{\sqrt{2q}}
\right).
\label{eq:approx-bound-with-q}
\end{equation}

Finally, since $N/m\ge1$,
\begin{equation}
q
=
\left\lfloor\frac{N}{m}\right\rfloor
\ge
\frac{N}{2m},
\end{equation}
and therefore
\begin{equation}
\frac{1}{\sqrt{2q}}
\le
\sqrt{\frac{m}{N}}
=
\sqrt{\frac{G-1}{G+G'-1}}.
\end{equation}
Substituting this inequality into
\textcolor{deepred}{Eq.\ref{eq:approx-bound-with-q}} proves
\textcolor{deepred}{Eq.\ref{eq:gmae-approximation-bound}}.
\end{proof}

\subsection{Uncertainty-Aware Advantage Scale}
\label{app:advantage-uncertainty}

For prompt $b$, let
\begin{equation}
\widehat P_b
=
\bigotimes_{i=1}^{G}\widehat P_{r_{b,i}}
\label{eq:prompt-joint-reward-distribution}
\end{equation}
denote the joint reward distribution.
For a reward realization
$\mathbf v=(v_1,\ldots,v_G)\sim\widehat P_b$, define its normalized
advantage vector as
\begin{equation}
\mathbf a_b(\mathbf v)
=
\begin{cases}
\left(
\dfrac{v_1-\mu_{\mathbf v}}{\sigma_{\mathbf v}},
\ldots,
\dfrac{v_G-\mu_{\mathbf v}}{\sigma_{\mathbf v}}
\right),
& \sigma_{\mathbf v}>0,\\[8pt]
\mathbf 0,
& \sigma_{\mathbf v}=0.
\end{cases}
\label{eq:realization-level-advantage}
\end{equation}
The corresponding GMAE advantage vector is
\begin{equation}
\mathbf A_b^{\mathrm{gmae}}
=
\mathbb E_{\mathbf v}
\left[
\mathbf a_b(\mathbf v)
\right].
\label{eq:gmae-advantage-vector}
\end{equation}
Since every $\mathbf a_b(\mathbf v)$ is zero-centered,
$\sum_{i=1}^{G}A_{b,i}^{\mathrm{gmae}}=0$, and the prompt-wise GMAE
scale satisfies
\begin{equation}
\left(s_b^{\mathrm{gmae}}\right)^2
=
\operatorname{Var}
\left(
\{A_{b,i}^{\mathrm{gmae}}\}_{i=1}^{G}
\right)
=
\frac{1}{G}
\left\|
\mathbf A_b^{\mathrm{gmae}}
\right\|_2^2.
\label{eq:prompt-gmae-scale}
\end{equation}

Let $\mathbf v$ and $\mathbf v'$ be independent draws from
$\widehat P_b$.
We define the context-induced reward uncertainty as
\begin{equation}
\mathcal U_b
=
\Pr_{\mathbf v}\!\left(\sigma_{\mathbf v}=0\right)
+
\frac{1}{2G}
\mathbb E_{\mathbf v,\mathbf v'}
\left[
\left\|
\mathbf a_b(\mathbf v)
-
\mathbf a_b(\mathbf v')
\right\|_2^2
\right].
\label{eq:reward-uncertainty}
\end{equation}
The first term captures reward realizations with no relative reward
information, while the second measures disagreement between relative
reward assignments across realizations.

\begin{theorem}[Reward Uncertainty and GMAE Scale]
\label{thm:reward-uncertainty-scale}
For every prompt $b$,
\begin{equation}
\left(s_b^{\mathrm{gmae}}\right)^2
=
1-\mathcal U_b.
\label{eq:uncertainty-scale-identity}
\end{equation}
Consequently, greater context-induced reward uncertainty yields a
smaller prompt-wise GMAE scale.
\end{theorem}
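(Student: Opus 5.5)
The identity follows from a bias–variance decomposition of the realization-level advantage vectors. Throughout, $\mathbf v$ and $\mathbf v'$ are independent draws from $\widehat P_b$.

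\textbf{Step 1: pointwise norm of a single realization.} I will first show that
\begin{equation*}
\|\mathbf a_b(\mathbf v)\|_2^2 = G\,\mathbbm{1}[\sigma_{\mathbf v}>0].
\end{equation*}
When $\sigma_{\mathbf v}>0$, this follows directly from the population-variance definition in Eq.~\ref{eq:grpo-advantage}:
\begin{equation*}
\sum_{i=1}^{G}\frac{(v_i-\mu_{\mathbf v})^2}{\sigma_{\mathbf v}^2}=\frac{G\sigma_{\mathbf v}^2}{\sigma_{\mathbf v}^2}=G.
\end{equation*}
When $\sigma_{\mathbf v}=0$, the vector is $\mathbf 0$ by the convention in Eq.~\ref{eq:realization-level-advantage}. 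Taking expectations gives
\begin{equation*}
\mathbb E_{\mathbf v}\|\mathbf a_b(\mathbf v)\|_2^2 = G\bigl(1-\Pr_{\mathbf v}(\sigma_{\mathbf v}=0)\bigr).
\end{equation*}

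\textbf{Step 2: expand the pairwise disagreement term.} Using independence of $\mathbf v$ and $\mathbf v'$ and the fact that they are identically distributed,
\begin{equation*}
\mathbb E_{\mathbf v,\mathbf v'}\|\mathbf a_b(\mathbf v)-\mathbf a_b(\mathbf v')\|_2^2 = 2\,\mathbb E_{\mathbf v}\|\mathbf a_b(\mathbf v)\|_2^2 - 2\bigl\langle \mathbb E\mathbf a_b(\mathbf v),\mathbb E\mathbf a_b(\mathbf v')\bigr\rangle = 2\,\mathbb E_{\mathbf v}\|\mathbf a_b(\mathbf v)\|_2^2 - 2\|\mathbf A_b^{\mathrm{gmae}}\|_2^2,
\end{equation*}
where the last equality uses the definition in Eq.~\ref{eq:gmae-advantage-vector}. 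This is the standard identity stating that half the expected squared distance between two i.i.d.\ copies equals the total variance.

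\textbf{Step 3: combine.} Substituting Steps 1 and 2 into Eq.~\ref{eq:reward-uncertainty} gives
\begin{equation*}
\mathcal U_b = \Pr(\sigma_{\mathbf v}=0) + \bigl(1-\Pr(\sigma_{\mathbf v}=0)\bigr) - \frac{1}{G}\|\mathbf A_b^{\mathrm{gmae}}\|_2^2 = 1-\frac{1}{G}\|\mathbf A_b^{\mathrm{gmae}}\|_2^2 .
\end{equation*}
By Eq.~\ref{eq:prompt-gmae-scale}, $\frac{1}{G}\|\mathbf A_b^{\mathrm{gmae}}\|_2^2=(s_b^{\mathrm{gmae}})^2$, which proves Eq.~\ref{eq:uncertainty-scale-identity}. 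The monotone consequence is then immediate: $(s_b^{\mathrm{gmae}})^2$ decreases exactly as $\mathcal U_b$ increases.

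Eq.~\ref{eq:prompt-gmae-scale} itself needs a short justification, namely that $\mathbf A_b^{\mathrm{gmae}}$ is zero-mean. This holds because each $\mathbf a_b(\mathbf v)$ sums to zero, and zero-sum vectors remain zero-sum after averaging.

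\textbf{Main obstacle.} The only delicate point is the degenerate case $\sigma_{\mathbf v}=0$. There the squared norm in Step 1 is $0$ rather than $G$, and the first term of $\mathcal U_b$ is what compensates for that lost mass. The proof must therefore apply the zero-advantage convention consistently in both Step 1 and Step 2. Beyond this, everything is finite-sum algebra: $\widehat P_b$ has finite support, so every expectation is well defined.
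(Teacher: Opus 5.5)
Your proposal is correct and follows essentially the same route as the paper. Both use the pointwise identity $\|\mathbf a_b(\mathbf v)\|_2^2 = G\,\mathbbm{1}[\sigma_{\mathbf v}>0]$, expand the i.i.d.\ pairwise disagreement as second moment minus squared mean, and rearrange, with your zero-sum justification for $(s_b^{\mathrm{gmae}})^2=\frac{1}{G}\|\mathbf A_b^{\mathrm{gmae}}\|_2^2$ matching the paper's remark just before the theorem.
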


\begin{proof}
For any non-degenerate reward realization, group-wise normalization gives
\begin{equation}
\frac{1}{G}
\left\|
\mathbf a_b(\mathbf v)
\right\|_2^2
=
\frac{1}{G}
\sum_{i=1}^{G}
\left(
\frac{v_i-\mu_{\mathbf v}}
{\sigma_{\mathbf v}}
\right)^2
=
1.
\label{eq:nondegenerate-realization-scale}
\end{equation}
When $\sigma_{\mathbf v}=0$, we assign
$\mathbf a_b(\mathbf v)=\mathbf 0$.
Therefore,
\begin{equation}
\frac{1}{G}
\mathbb E_{\mathbf v}
\left[
\left\|
\mathbf a_b(\mathbf v)
\right\|_2^2
\right]
=
1-
\Pr_{\mathbf v}\!\left(
\sigma_{\mathbf v}=0
\right).
\label{eq:expected-realization-scale}
\end{equation}

Because $\mathbf v$ and $\mathbf v'$ are independent and identically
distributed,
\begin{equation}
\begin{aligned}
&
\frac{1}{2G}
\mathbb E_{\mathbf v,\mathbf v'}
\left[
\left\|
\mathbf a_b(\mathbf v)
-
\mathbf a_b(\mathbf v')
\right\|_2^2
\right]
\\
&\quad=
\frac{1}{G}
\mathbb E_{\mathbf v}
\left[
\left\|
\mathbf a_b(\mathbf v)
\right\|_2^2
\right]
-
\frac{1}{G}
\left\|
\mathbb E_{\mathbf v}
\left[
\mathbf a_b(\mathbf v)
\right]
\right\|_2^2
\\
&\quad=
1-
\Pr_{\mathbf v}\!\left(
\sigma_{\mathbf v}=0
\right)
-
\left(s_b^{\mathrm{gmae}}\right)^2.
\end{aligned}
\label{eq:pairwise-disagreement}
\end{equation}
Rearranging \textcolor{deepred}{Eq.\ref{eq:pairwise-disagreement}} gives
\begin{equation}
\begin{aligned}
\left(s_b^{\mathrm{gmae}}\right)^2
&=
1-
\Bigg[
\Pr_{\mathbf v}\!\left(
\sigma_{\mathbf v}=0
\right)
\\
&\qquad+
\frac{1}{2G}
\mathbb E_{\mathbf v,\mathbf v'}
\left[
\left\|
\mathbf a_b(\mathbf v)
-
\mathbf a_b(\mathbf v')
\right\|_2^2
\right]
\Bigg]
\\
&=
1-\mathcal U_b,
\end{aligned}
\label{eq:uncertainty-scale-proof}
\end{equation}
which proves the theorem.
\end{proof}

Theorem~\ref{thm:reward-uncertainty-scale} explains why advantage signs
matter.
When a response is favored in some reward realizations but disfavored in
others, its normalized advantage changes sign, increasing the
disagreement term in \textcolor{deepred}{Eq.\ref{eq:reward-uncertainty}}.
Marginalization then causes stronger cancellation and produces a smaller
scale, leading to more conservative policy updates.

Prompt-level calibration would independently restore every prompt to its
non-marginalized scale and remove these uncertainty-dependent
differences.
In contrast, batch-level calibration applies one shared factor
$\lambda$, so
\begin{equation}
\frac{
\widetilde s_b^{\mathrm{gmae}}
}{
\widetilde s_{b'}^{\mathrm{gmae}}
}
=
\frac{
\lambda s_b^{\mathrm{gmae}}
}{
\lambda s_{b'}^{\mathrm{gmae}}
}
=
\frac{
s_b^{\mathrm{gmae}}
}{
s_{b'}^{\mathrm{gmae}}
},
\label{eq:batch-calibration-relative-scale}
\end{equation}
preserving the relative uncertainty-aware scales across prompts.
\section{Detailed Implementation of GMAE Instantiation}
\label{app:gmae-instantiations}

This section details the three GMAE instantiations introduced in
\textcolor{deepred}{Section~\ref{sec:gmae-instantiations}}.
All three use the binary reward
\begin{equation}
\mathcal R_{\mathrm{bin}}(\mathbf y_i\mid z)
=
\mathbbm{1}\!\left[
\mathcal A(\mathbf y_i)=z
\right],
\label{eq:app-gmae-binary-reward}
\end{equation}
where \(z\) is the reward reference constructed by the corresponding
self-rewarding rule.

Given \(K\) reward realizations
\(\{r_i^k\}_{k=1}^{K}\) for response \(\mathbf y_i\), its empirical
reward distribution is
\begin{equation}
p_i
=
\frac{1}{K}\sum_{k=1}^{K}r_i^k,
\qquad
\widehat P_{r_i}
=
p_i\delta_1+(1-p_i)\delta_0.
\label{eq:app-gmae-binary-distribution}
\end{equation}
For \(\mathbf v=(v_1,\ldots,v_G)\in\{0,1\}^{G}\), define
\begin{equation}
w_{\mathbf p}(\mathbf v)
=
\prod_{j=1}^{G}
p_j^{v_j}(1-p_j)^{1-v_j},
\qquad
h_i(\mathbf v)
=
\begin{cases}
\dfrac{v_i-\mu_{\mathbf v}}{\sigma_{\mathbf v}},
& \sigma_{\mathbf v}>0,\\[6pt]
0,
& \sigma_{\mathbf v}=0,
\end{cases}
\label{eq:app-gmae-joint-weight}
\end{equation}
where
\begin{equation}
\mu_{\mathbf v}
=
\frac{1}{G}\sum_{j=1}^{G}v_j,
\qquad
\sigma_{\mathbf v}
=
\sqrt{
\frac{1}{G}
\sum_{j=1}^{G}
(v_j-\mu_{\mathbf v})^2
}.
\end{equation}
The marginalized advantage can therefore be computed exactly as
\begin{equation}
A_i^{\mathrm{gmae}}
=
\sum_{\mathbf v\in\{0,1\}^{G}}
w_{\mathbf p}(\mathbf v)h_i(\mathbf v).
\label{eq:app-gmae-binary-advantage}
\end{equation}
The first reward realization is produced by the original group used by
the corresponding base method and provides
\(A_i^{\mathrm{base}}\) for calibration.
After processing all prompts in a batch, we apply the batch-level
calibration in
\textcolor{deepred}{Eq.\ref{eq:gmae-calibration}}.

Each instantiation requires \(G'\) auxiliary rollouts beyond its base
method. We therefore focus below on the additional operations performed
after rollout generation.

\RestyleAlgo{ruled}

\subsection{GMAE$_0$: Standard Voting}

GMAE$_0$ instantiates GMAE with the majority-voting rule used by
TTRL~\colorcitep{zuo2026ttrl}.
For any response group \(\mathcal B\), its reward reference is
\begin{equation}
f_0(\mathcal B)
=
\arg\max_a
\sum_{\mathbf y\in\mathcal B}
\mathbbm{1}\!\left[
\mathcal A(\mathbf y)=a
\right].
\label{eq:app-gmae-zero-reference}
\end{equation}

The original rollout group is used as the first realization.
For each focal response, the remaining \(K-1\) contexts are sampled
uniformly without replacement from the shared candidate pool.

\begin{algorithm}[t]
\caption{GMAE$_0$: Standard Voting}
\label{alg:gmae-zero}
\KwIn{Prompt \(\mathbf x\), policy \(\pi_{\theta_{\mathrm{old}}}\),
group sizes \(G,G'\), and number of contexts \(K\)}
\KwOut{Advantages \(\{A_i^{\mathrm{gmae}}\}_{i=1}^{G}\)}

Sample
\(\mathcal C=\{\mathbf y_1,\ldots,\mathbf y_{G+G'}\}\)
from \(\pi_{\theta_{\mathrm{old}}}(\cdot\mid\mathbf x)\)\;

Set
\(\mathcal G^{\otimes G}=\{\mathbf y_1,\ldots,\mathbf y_G\}\)
and \(z^1=f_0(\mathcal G^{\otimes G})\)\;

\For{\(i=1,\ldots,G\)}{
    Set
    \(r_i^1=\mathcal R_{\mathrm{bin}}(\mathbf y_i\mid z^1)\)\;

    Sample \(K-1\) distinct contexts
    \(\{\mathcal S_i^k\}_{k=2}^{K}\) from
    \(\mathcal C\setminus\{\mathbf y_i\}\),
    with \(|\mathcal S_i^k|=G-1\)\;

    \For{\(k=2,\ldots,K\)}{
        Set
        \(\mathcal B_i^k=\{\mathbf y_i\}\cup\mathcal S_i^k\)\;

        Compute
        \(z_i^k=f_0(\mathcal B_i^k)\)\;

        Set
        \(r_i^k
        =\mathcal R_{\mathrm{bin}}(\mathbf y_i\mid z_i^k)\)\;
    }

    Compute
    \(p_i=K^{-1}\sum_{k=1}^{K}r_i^k\)\;
}

Compute \(\{A_i^{\mathrm{gmae}}\}_{i=1}^{G}\) using
\textcolor{deepred}{Eq.\ref{eq:app-gmae-binary-advantage}}\;

\Return
\(\{A_i^{\mathrm{gmae}}\}_{i=1}^{G}\)\;
\end{algorithm}

\paragraph{Additional Computation.}
Compared with TTRL, GMAE$_0$ evaluates \(K-1\) additional group
contexts for each focal response.
This adds \(G(K-1)\) majority-voting operations and the same number of
binary reward evaluations.
Since each vote scans \(G\) responses, these operations require
\(\mathcal O(G^2(K-1))\) scalar computation.
Exact distribution-based advantage estimation further enumerates at
most \(2^G\) joint binary reward realizations, requiring
\(\mathcal O(G2^G)\) scalar computation.
These operations require neither additional model generation nor
backpropagation.

\subsection{GMAE$_{\mathrm{CR}}$: Co-Refinement}

GMAE$_{\mathrm{CR}}$ follows Co-Rewarding
~\colorcitep{zhang2026corewarding}.
At training step \(t\), its co-teacher parameters are updated as
\begin{equation}
\widetilde{\theta}_t
=
\alpha_t\widetilde{\theta}_{t-1}
+
(1-\alpha_t)\theta_{\mathrm{old},t},
\label{eq:app-gmae-cr-teacher}
\end{equation}
where \(\alpha_t\in[0,1]\) is the teacher update coefficient.
For a co-teacher response group \(\widetilde{\mathcal B}\), the reward
reference is
\begin{equation}
f_{\mathrm{CR}}(\widetilde{\mathcal B})
=
\arg\max_a
\sum_{\widetilde{\mathbf y}\in\widetilde{\mathcal B}}
\mathbbm{1}\!\left[
\mathcal A(\widetilde{\mathbf y})=a
\right].
\label{eq:app-gmae-cr-reference}
\end{equation}

The reward reference depends only on the co-teacher group.
Therefore, the same \(K\) reward references can be reused for all
\(G\) main responses.

\begin{algorithm}[t]
\caption{GMAE$_{\mathrm{CR}}$: Co-Refinement}
\label{alg:gmae-cr}
\KwIn{Prompt \(\mathbf x\), parameters
\(\theta_{\mathrm{old},t}\) and \(\widetilde{\theta}_{t-1}\),
coefficient \(\alpha_t\), group sizes \(G,G'\), and number of contexts
\(K\)}
\KwOut{Advantages \(\{A_i^{\mathrm{gmae}}\}_{i=1}^{G}\)}

Update \(\widetilde{\theta}_t\) using
\textcolor{deepred}{Eq.\ref{eq:app-gmae-cr-teacher}}\;

Sample
\(\mathcal G^{\otimes G}=\{\mathbf y_i\}_{i=1}^{G}\)
from \(\pi_{\theta_{\mathrm{old},t}}(\cdot\mid\mathbf x)\)\;

Sample
\(\widetilde{\mathcal C}
=\{\widetilde{\mathbf y}_j\}_{j=1}^{G+G'}\)
from \(\pi_{\widetilde{\theta}_t}(\cdot\mid\mathbf x)\)\;

Set
\(\widetilde{\mathcal B}^{1}
=\{\widetilde{\mathbf y}_j\}_{j=1}^{G}\)\;

Sample \(K-1\) distinct groups
\(\{\widetilde{\mathcal B}^{k}\}_{k=2}^{K}\)
from \(\widetilde{\mathcal C}\), with
\(|\widetilde{\mathcal B}^{k}|=G\)\;

\For{\(k=1,\ldots,K\)}{
    Compute
    \(z_{\mathrm{CR}}^k
    =f_{\mathrm{CR}}(\widetilde{\mathcal B}^{k})\)\;
}

\For{\(i=1,\ldots,G\)}{
    \For{\(k=1,\ldots,K\)}{
        Set
        \(r_i^k
        =
        \mathcal R_{\mathrm{bin}}
        (\mathbf y_i\mid z_{\mathrm{CR}}^k)\)\;
    }

    Compute
    \(p_i=K^{-1}\sum_{k=1}^{K}r_i^k\)\;
}

Compute \(\{A_i^{\mathrm{gmae}}\}_{i=1}^{G}\) using
\textcolor{deepred}{Eq.\ref{eq:app-gmae-binary-advantage}}\;

\Return
\(\{A_i^{\mathrm{gmae}}\}_{i=1}^{G}\)\;
\end{algorithm}

\paragraph{Additional Computation.}
The co-teacher update is inherited from Co-Rewarding and therefore
introduces no GMAE-specific computation.
Compared with Co-Rewarding, GMAE$_{\mathrm{CR}}$ performs \(K-1\)
additional majority votes over groups of \(G\) co-teacher responses.
Because each reward reference is shared across all main responses, it
also adds \(G(K-1)\) binary reward evaluations.
Together with exact advantage estimation, the additional scalar
complexity is
\begin{equation}
\mathcal O\!\left(
G(K-1)+G2^G
\right).
\end{equation}

\subsection{GMAE$_{\mathrm{SR}}$: Self-Refinement}

GMAE$_{\mathrm{SR}}$ follows the self-reflective reward construction of
SR-TTRL~\colorcitep{wu2026srttrl}.
Instead of independently performing self-reflection for every group
context, it first ranks all candidate answers in the shared pool and
then reuses this ranking across contexts.

Given the shared candidate pool
\(\mathcal C=\{\mathbf y_1,\ldots,\mathbf y_{G+G'}\}\), let
\begin{equation}
\mathcal U_{\mathcal C}
=
\left\{
\mathcal A(\mathbf y_j)
\right\}_{j=1}^{G+G'}
\label{eq:app-gmae-sr-answer-set}
\end{equation}
denote its set of distinct answers.
For each \(a\in\mathcal U_{\mathcal C}\), GMAE$_{\mathrm{SR}}$ selects
a representative response and summarizes its reasoning:
\begin{equation}
\mathbf s_a
=
\operatorname{Summ}_{\pi_{\theta_{\mathrm{old}}}}
\left(
\mathbf x,
\operatorname{Rep}
\left\{
\mathbf y_j\in\mathcal C
\mid
\mathcal A(\mathbf y_j)=a
\right\}
\right).
\label{eq:app-gmae-sr-summary}
\end{equation}
It then performs self-reflection once over all answer-summary pairs,
producing a ranking \(\succ_{\mathrm{SR}}\) over
\(\mathcal U_{\mathcal C}\):
\begin{equation}
\succ_{\mathrm{SR}}
=
\operatorname{Reflect}_{\pi_{\theta_{\mathrm{old}}}}
\left(
\mathbf x,
\left\{
(a,\mathbf s_a)
\right\}_{a\in\mathcal U_{\mathcal C}}
\right).
\label{eq:app-gmae-sr-ranking}
\end{equation}
Here, \(a\succ_{\mathrm{SR}}a'\) means that \(a\) is ranked above
\(a'\) by self-reflection.
The representative-selection rule, summary prompt, and reflection
prompt follow the original SR-TTRL implementation.

For the \(k\)-th group context of response \(\mathbf y_i\), define
\begin{equation}
\mathcal B_i^k
=
\{\mathbf y_i\}\cup\mathcal S_i^k,
\qquad
\mathcal U_i^k
=
\left\{
\mathcal A(\mathbf y)
\mid
\mathbf y\in\mathcal B_i^k
\right\}.
\label{eq:app-gmae-sr-context-answers}
\end{equation}
Its reward reference is the highest-ranked answer present in this
context:
\begin{equation}
z_{\mathrm{SR},i}^k
=
\max_{\succ_{\mathrm{SR}}}\mathcal U_i^k,
\qquad
r_i^k
=
\mathcal R_{\mathrm{bin}}
\left(
\mathbf y_i
\mid
z_{\mathrm{SR},i}^k
\right).
\label{eq:app-gmae-sr-context-reference}
\end{equation}

\begin{algorithm}[t]
\caption{GMAE$_{\mathrm{SR}}$: Self-Refinement}
\label{alg:gmae-sr}
\KwIn{Prompt \(\mathbf x\), policy \(\pi_{\theta_{\mathrm{old}}}\),
group sizes \(G,G'\), and number of contexts \(K\)}
\KwOut{Advantages \(\{A_i^{\mathrm{gmae}}\}_{i=1}^{G}\)}

Sample
\(\mathcal C=\{\mathbf y_1,\ldots,\mathbf y_{G+G'}\}\)
from \(\pi_{\theta_{\mathrm{old}}}(\cdot\mid\mathbf x)\)\;

Set
\(\mathcal G^{\otimes G}
=\{\mathbf y_1,\ldots,\mathbf y_G\}\)\;

Construct the distinct-answer set
\(\mathcal U_{\mathcal C}\)\;

\ForEach{\(a\in\mathcal U_{\mathcal C}\)}{
    Select
    \(\mathbf y_a^{\mathrm{rep}}
    =
    \operatorname{Rep}
    \{\mathbf y_j\in\mathcal C:
    \mathcal A(\mathbf y_j)=a\}\)\;

    Compute
    \(\mathbf s_a
    =
    \operatorname{Summ}_{\pi_{\theta_{\mathrm{old}}}}
    (\mathbf x,\mathbf y_a^{\mathrm{rep}})\)\;
}

Compute the global answer ranking
\(\succ_{\mathrm{SR}}\) using
\textcolor{deepred}{Eq.\ref{eq:app-gmae-sr-ranking}}\;

\For{\(i=1,\ldots,G\)}{
    Set
    \(\mathcal S_i^1
    =
    \mathcal G^{\otimes G}\setminus\{\mathbf y_i\}\)\;

    Sample \(K-1\) additional distinct contexts
    \(\{\mathcal S_i^k\}_{k=2}^{K}\) from
    \(\mathcal C\setminus\{\mathbf y_i\}\),
    with \(|\mathcal S_i^k|=G-1\)\;

    \For{\(k=1,\ldots,K\)}{
        Set
        \(\mathcal B_i^k
        =
        \{\mathbf y_i\}\cup\mathcal S_i^k\)\;

        Extract the answers
        \(\mathcal U_i^k
        =
        \{\mathcal A(\mathbf y):
        \mathbf y\in\mathcal B_i^k\}\)\;

        Select
        \(z_{\mathrm{SR},i}^k
        =
        \max_{\succ_{\mathrm{SR}}}\mathcal U_i^k\)\;

        Set
        \(r_i^k
        =
        \mathcal R_{\mathrm{bin}}
        (\mathbf y_i\mid z_{\mathrm{SR},i}^k)\)\;
    }

    Compute
    \(p_i=K^{-1}\sum_{k=1}^{K}r_i^k\)\;
}

Compute \(\{A_i^{\mathrm{gmae}}\}_{i=1}^{G}\) using
\textcolor{deepred}{Eq.\ref{eq:app-gmae-binary-advantage}}\;

\Return
\(\{A_i^{\mathrm{gmae}}\}_{i=1}^{G}\)\;
\end{algorithm}

\paragraph{Additional Computation.}
Let \(U_{\mathrm{base}}\) and \(U_{\mathrm{pool}}\) denote the numbers
of distinct answers in the original rollout group and the shared
candidate pool, respectively.
SR-TTRL summarizes \(U_{\mathrm{base}}\) answer classes and performs
one reflection, whereas GMAE$_{\mathrm{SR}}$ summarizes
\(U_{\mathrm{pool}}\) classes and also performs only one reflection.
Therefore, GMAE$_{\mathrm{SR}}$ adds
\(U_{\mathrm{pool}}-U_{\mathrm{base}}\leq G'\) summary operations and
does not increase the number of reflection operations, although the
single reflection covers a larger candidate set.

Once the global ranking is obtained, each additional context requires
only extracting its answers and selecting the highest-ranked one.
Compared with SR-TTRL, this introduces \(G(K-1)\) context-specific
ranking lookups and binary reward evaluations.
A direct implementation scans at most \(G\) responses per context,
giving \(\mathcal O(G^2(K-1))\) scalar computation.
Exact distribution-based advantage estimation further requires
\(\mathcal O(G2^G)\) scalar operations.
Thus, GMAE$_{\mathrm{SR}}$ reuses a single self-reflective ranking
across all contexts rather than repeating the expensive reflection
procedure \(K\) times.
\section{Full Scalability Analysis Results}
\label{app:scalability-results}


\begin{table}[H]
\caption{
\textbf{Scalability on RL backbone.}
Benchmark-wise \textsc{Mean@16} results on
\texttt{Qwen3-8B-Base} using GSPO as the RL backbone.
}
\label{tab:scalability-gspo}
\vspace{-0.08in}
\centering
\footnotesize
\renewcommand{\arraystretch}{1.1}
\setlength{\tabcolsep}{1.5mm}

\resizebox{\textwidth}{!}{
\begin{tabular}{lccccccccc}
\toprule

\multirow{2}{*}{\textbf{Method}}
& \multicolumn{5}{c}{\textbf{Mathematics}}
& \textbf{Science}
& \textbf{Knowledge}
& \textbf{Coding}
& \multirow{2}{*}{\textbf{\textit{Average}}}
\\

\cmidrule(lr){2-6}
\cmidrule(lr){7-7}
\cmidrule(lr){8-8}
\cmidrule(lr){9-9}

& \textbf{MATH500}
& \textbf{AMC}
& \textbf{AIME24}
& \textbf{AIME25}
& \textbf{AIME26}
& \textbf{GPQA}
& \textbf{MMLU-Pro}
& \textbf{LiveCode}
&
\\

\midrule

TTRL
& 78.8 & 45.3 & 14.4 & 11.3 & 7.2 & 40.3 & 66.6 & 22.3 & 35.8 \\

Self-Harmony
& 78.4 & 43.2 & 15.2 & 11.0 & 6.7 & 40.9 & \textbf{70.8} & 22.8 & 36.1 \\

Co-Reward
& 80.3 & 47.7 & 15.5 & 12.7 & 7.9 & 42.0 & 68.7 & 23.2 & 37.3 \\

SR-TTRL
& 79.7 & 47.1 & 14.7 & 12.6 & 8.0 & 41.4 & 67.6 & 22.9 & 36.8 \\

\rowcolor{gray!20}
\textbf{GMAE$_0$ (Ours)}
& 81.9 & 48.7 & 17.3 & 14.0 & 9.0 & 42.1 & 67.9 & 24.2 & 38.1 \\

\rowcolor{gray!20}
\textbf{GMAE$_{\text{CR}}$ (Ours)}
& \textbf{83.3}
& \textbf{50.0}
& \textbf{17.6}
& \textbf{15.0}
& \textbf{9.7}
& \textbf{43.8}
& 69.7
& \textbf{25.0}
& \textbf{39.3} \\

\rowcolor{gray!20}
\textbf{GMAE$_{\text{SR}}$ (Ours)}
& 82.4 & 49.8 & 17.5 & 14.5 & 9.1 & 43.2 & 68.5 & 24.8 & 38.7 \\

\bottomrule
\end{tabular}
}
\end{table}


\begin{table}[H]
\caption{
\textbf{Scalability on RL backbone.}
Benchmark-wise \textsc{Mean@16} results on
\texttt{Qwen3-8B-Base} using REINFORCE++ as the RL backbone.
}
\label{tab:scalability-reinforce}
\vspace{-0.08in}
\centering
\footnotesize
\renewcommand{\arraystretch}{1.1}
\setlength{\tabcolsep}{1.5mm}

\resizebox{\textwidth}{!}{
\begin{tabular}{lccccccccc}
\toprule

\multirow{2}{*}{\textbf{Method}}
& \multicolumn{5}{c}{\textbf{Mathematics}}
& \textbf{Science}
& \textbf{Knowledge}
& \textbf{Coding}
& \multirow{2}{*}{\textbf{\textit{Average}}}
\\

\cmidrule(lr){2-6}
\cmidrule(lr){7-7}
\cmidrule(lr){8-8}
\cmidrule(lr){9-9}

& \textbf{MATH500}
& \textbf{AMC}
& \textbf{AIME24}
& \textbf{AIME25}
& \textbf{AIME26}
& \textbf{GPQA}
& \textbf{MMLU-Pro}
& \textbf{LiveCode}
&
\\

\midrule

TTRL
& 75.9 & 40.1 & 10.7 & 9.7 & 5.7 & 36.8 & 64.3 & 19.4 & 32.8 \\

Self-Harmony
& 75.5 & 39.6 & 11.1 & 10.4 & 6.3 & 38.1 & \textbf{68.3} & 19.6 & 33.6 \\

Co-Reward
& 76.7 & 41.5 & 11.9 & 11.5 & 6.6 & 37.7 & 64.5 & 20.4 & 33.9 \\

SR-TTRL
& 76.9 & 40.8 & 11.7 & 10.8 & 6.1 & 37.4 & 65.0 & 20.1 & 33.6 \\

\rowcolor{gray!20}
\textbf{GMAE$_0$ (Ours)}
& 79.8 & 45.5 & 13.8 & 12.9 & 9.3 & 40.2 & 66.7 & 22.8 & 36.4 \\

\rowcolor{gray!20}
\textbf{GMAE$_{\text{CR}}$ (Ours)}
& \textbf{80.5}
& \textbf{47.8}
& \textbf{14.9}
& \textbf{14.2}
& \textbf{10.2}
& \textbf{41.0}
& 66.8
& \textbf{24.3}
& \textbf{37.5} \\

\rowcolor{gray!20}
\textbf{GMAE$_{\text{SR}}$ (Ours)}
& 80.0 & 47.1 & 14.4 & 13.7 & 9.2 & \textbf{41.0} & 67.4 & 23.7 & 37.1 \\

\bottomrule
\end{tabular}
}
\end{table}


\begin{table}[H]
\caption{
\textbf{Scalability on training dataset.} \textsc{Mean@16} results on
\texttt{Qwen3-8B-Base} trained with the \texttt{Open-RS} dataset.
}
\label{tab:scalability-openrs}
\vspace{-0.08in}
\centering
\footnotesize
\renewcommand{\arraystretch}{1.1}
\setlength{\tabcolsep}{1.5mm}

\resizebox{\textwidth}{!}{
\begin{tabular}{lccccccccc}
\toprule

\multirow{2}{*}{\textbf{Method}}
& \multicolumn{5}{c}{\textbf{Mathematics}}
& \textbf{Science}
& \textbf{Knowledge}
& \textbf{Coding}
& \multirow{2}{*}{\textbf{\textit{Average}}}
\\

\cmidrule(lr){2-6}
\cmidrule(lr){7-7}
\cmidrule(lr){8-8}
\cmidrule(lr){9-9}

& \textbf{MATH500}
& \textbf{AMC}
& \textbf{AIME24}
& \textbf{AIME25}
& \textbf{AIME26}
& \textbf{GPQA}
& \textbf{MMLU-Pro}
& \textbf{LiveCode}
&
\\

\midrule

TTRL
& 76.4 & 42.7 & 13.5 & 10.4 & 7.6 & 38.8 & 62.1 & 20.3 & 34.0 \\

Self-Harmony
& 75.6 & 41.6 & 13.7 & 10.8 & 7.2 & 39.3 & \textbf{69.3} & 21.4 & 34.9 \\

Co-Reward
& 77.3 & 43.2 & 14.1 & 11.6 & 7.8 & 40.4 & 64.9 & 22.7 & 35.3 \\

SR-TTRL
& 77.0 & 43.5 & 13.8 & 11.4 & 7.8 & 40.5 & 65.7 & 22.2 & 35.2 \\

\rowcolor{gray!20}
\textbf{GMAE$_0$ (Ours)}
& 77.8 & 44.0 & 14.9 & 12.3 & 8.3 & 41.2 & 67.9 & 22.9 & 36.2 \\

\rowcolor{gray!20}
\textbf{GMAE$_{\text{CR}}$ (Ours)}
& \textbf{79.3}
& \textbf{46.5}
& \textbf{16.3}
& \textbf{13.7}
& \textbf{9.0}
& \textbf{42.3}
& 68.5
& \textbf{23.1}
& \textbf{37.3} \\

\rowcolor{gray!20}
\textbf{GMAE$_{\text{SR}}$ (Ours)}
& 78.6 & 46.1 & 16.2 & 13.0 & 8.8 & 40.3 & 67.2 & 22.9 & 36.6 \\

\bottomrule
\end{tabular}
}
\end{table}


\begin{table}[H]
\caption{
\textbf{Scalability on training dataset.} \textsc{Mean@16} results on
\texttt{Qwen3-8B-Base} trained with the \texttt{MATH-8K} dataset.
}
\label{tab:scalability-math8k}
\vspace{-0.08in}
\centering
\footnotesize
\renewcommand{\arraystretch}{1.1}
\setlength{\tabcolsep}{1.5mm}

\resizebox{\textwidth}{!}{
\begin{tabular}{lccccccccc}
\toprule

\multirow{2}{*}{\textbf{Method}}
& \multicolumn{5}{c}{\textbf{Mathematics}}
& \textbf{Science}
& \textbf{Knowledge}
& \textbf{Coding}
& \multirow{2}{*}{\textbf{\textit{Average}}}
\\

\cmidrule(lr){2-6}
\cmidrule(lr){7-7}
\cmidrule(lr){8-8}
\cmidrule(lr){9-9}

& \textbf{MATH500}
& \textbf{AMC}
& \textbf{AIME24}
& \textbf{AIME25}
& \textbf{AIME26}
& \textbf{GPQA}
& \textbf{MMLU-Pro}
& \textbf{LiveCode}
&
\\

\midrule

TTRL
& 75.5 & 41.2 & 12.3 & 9.5 & 6.7 & 37.5 & 63.2 & 19.5 & 33.2 \\

Self-Harmony
& 76.4 & 41.5 & 12.0 & 9.9 & 6.5 & 37.3 & 61.3 & 19.4 & 33.0 \\

Co-Reward
& 77.0 & 42.3 & 13.6 & 8.9 & 7.1 & 38.2 & \textbf{66.0} & 21.2 & 34.3 \\

SR-TTRL
& 77.1 & 42.3 & 13.6 & 10.4 & 7.8 & 38.6 & 63.9 & 21.0 & 34.3 \\

\rowcolor{gray!20}
\textbf{GMAE$_0$ (Ours)}
& 79.1 & 43.7 & 15.0 & 11.1 & 8.3 & 40.1 & 65.1 & 21.2 & 35.5 \\

\rowcolor{gray!20}
\textbf{GMAE$_{\text{CR}}$ (Ours)}
& 80.3 & \textbf{44.0} & \textbf{15.7} & 12.0 & 9.0 & 41.0 & 64.4 & 22.9 & 36.2 \\

\rowcolor{gray!20}
\textbf{GMAE$_{\text{SR}}$ (Ours)}
& \textbf{80.8} & 43.5 & 15.4 & \textbf{12.3} & \textbf{9.5} & \textbf{41.2} & 66.1 & \textbf{23.2} & \textbf{36.5} \\

\bottomrule
\end{tabular}
}
\end{table}

\end{document}